\newtheorem{subexample}{Example}[example]
\newcommand{\trace}{\operatorname{trace}}
\newcommand{\Span}{\operatorname{span}}
\newcommand{\Close}{\operatorname{closure}}
\begin{document}

\title{A Generalized Representer Theorem \\for Hilbert Space - Valued Functions}

\author{\name Sanket Diwale \email sanket.diwale@epfl.ch
       \AND
       \name Colin N.\ Jones \email colin.jones@epfl.ch\\
       \addr Automatic Control Laboratory\\
       {\'E}cole Polytechnique F{\'e}d{\'e}rale de Lausanne\\
       Lausanne, Switzerland}

\editor{}

\maketitle

\begin{abstract}
The necessary and sufficient conditions for existence of a generalized representer theorem are presented for learning Hilbert space - valued functions.
Representer theorems involving explicit basis functions and Reproducing Kernels are a common occurrence in various machine learning algorithms like generalized least squares, support vector machines, Gaussian process regression and kernel based deep neural networks to name a few. Due to the more general structure of the underlying variational problems, the theory is also relevant to other application areas like optimal control, signal processing and decision making. We present the generalized representer as a unified view for supervised and semi-supervised learning methods, using the theory of linear operators and subspace valued maps. The implications of the theorem are presented with examples of multi input - multi output regression, kernel based deep neural networks, stochastic regression and sparsity learning problems as being special cases in this unified view.
\end{abstract}

\begin{keywords}
  Linear Operators, Adjoints, Kernels, Representer Theorems
\end{keywords}

\section{Introduction}

The development of kernel based methods for regression and machine learning has a long history with several algorithms basing themselves on the Reproducing Kernel Hilbert Space (RKHS) theory. Some of the early works in the field include \cite{Aronszajn1950,Tikhonov:1963,Wahba1990}, which looked at problems of spline interpolation and smoothing in the RKHS setting. Several practical learning algorithms like linear regression, support vector machines, Bayesian regression were also developed in their kernel forms to allow more complex nonlinear representations of data \citep[see][for some examples]{Bishop:2006:PRM:1162264}. Kernel based stochastic models are also popular in the form of Gaussian Process models \citep[see][]{Rasmussen06}. RKHS based neural networks have been investigated in \cite{Lawrence09,Rebai2016,DamianouL13}. 

Representer theorems provide a means to reduce infinite dimensional optimization problems for learning in the RKHS space to an equivalent and tractable finite dimensional optimization.
While works like \cite{Micchelli:2005:LVF:1119610.1119620,Minh:2011:VMR:3104482.3104490,JMLR:v17:14-036} present representer theorems for RKHS based learning methods for vector valued functions in Hilbert spaces, the theorems are presented independently for each specific learning algorithm. A Generalized Representer Theorem covering an entire class of learning algorithms for infinite dimensional vector valued outputs, to the best of the authors knowledge, is still missing from literature. The broadest form, so far, of such a Generalized Representer Theorem is presented in \cite{Argyriou:2014:UVR:3044805.3044976} which covers learning problems with finite dimensional vector valued outputs and requires a technical assumption of ``r-regularity" on the subspace valued maps used in the theorem, requiring a finite dimensional span for the subspace valued maps. 
We extend the framework of the Generalized Representer Theorem to arbitrary infinite dimensional vector output spaces and remove the ``r-regularity" assumption. As a consequence we cover stochastic process learning algorithms that were not covered by previous counterparts of the generalized theorem and allow for other more general infinite dimensional output learning algorithms.
We also show the $\ell_1$ regularization problems to be covered by the Generalized Representer Theorem using a non trivial, non $r$-regular  subspace valued map. 

A key underlying tool in the use of RKHS methods is the Riesz Representer Theorem \cite[Theorem 3.3.1]{conway_course_abstract} and the existence and uniqueness of adjoint operators for bounded linear operators given by \cite[Theorem 5.4.2]{conway_course_abstract}. The above two theorems combined with restrictions on the forms of the objective and constraint functionals in learning problems have led to several variants of Representer Theorems. 
Early variants of representer theorems are presented in \cite{Wahba1990} for variational problems in learning real valued functions with least squares regularization. Representer theorems for kernel versions of different learning algorithms like Kernel SVM, PCA, CCA, ICA can be found in \cite{suykens2010}. This has prompted investigation into unifying  representer theorems into a single generalized theorem and characterizing the class of problems for which a representer theorem can be guaranteed to exist. 

The first such results appear to have come from \cite{Scholkopf:2001:GRT:648300.755324}, where the problem has been addressed for learning real valued functions with a general class of regularizers and empirical risk functions. The regularizers considered were a class of monotonically increasing functions of the norm of decision variables and showed how most of the least squares algorithms in linear regression, SVMs and others were covered by a single generalized theorem. The work provides a sufficient condition for the existence of such representer theorems. \cite{DinuzzoS2012} relaxed the restriction on the regularizer further and provided necessary and sufficient conditions for the existence of representer theorems. \cite{DinuzzoS2012} allow the regularizer to be any lower semi-continuous functional on the decision variable as long as the functional satisfies an ``Orthomonotone" property. 
\cite{Scholkopf:2001:GRT:648300.755324,DinuzzoS2012} restricted the scope of their theorem to learning real valued functions. The generalized theorem was extended to learning multi-output functions in \cite{Argyriou:2014:UVR:3044805.3044976} for finite dimensional outputs. We extend this work here further to arbitrary Hilbert space outputs and remove a finite dimensional ``r-regularity" assumption made in \cite{Argyriou:2014:UVR:3044805.3044976}. While finite dimensional multi-output learning algorithms cover a relatively large class of algorithms it still leaves out the more general cases of stochastic or Bayesian regression and more general cases of learning mappings between abstract vector spaces. For example representer theorems for Bayesian regression from \cite{pillai2007characterizing} are not covered by previous works as noted in \cite{Argyriou:2014:UVR:3044805.3044976}. The extension to Hilbert space valued outputs allows us to treat these more general cases within the framework of a generalized representer theorem.
While further generalization beyond Hilbert spaces to Banach spaces may be possible using the notions of dual vector spaces, we will restrict ourselves to Hilbert spaces here, to maintain clarity in exposition. 

The main contributions made in this paper are: (i) extend the framework of Generalized Representer Theorems to learning infinite dimensional Hilbert space valued functions, (ii) remove the restriction of ``r-regularity" assumption from \cite{Argyriou:2014:UVR:3044805.3044976}, (iii) present examples of learning problems from stochastic and deterministic settings where the extensions presented have implications.

In Section \ref{sec:prelim} we present the preliminaries required to develop the generalized representer theorem for Hilbert space valued functions. Section \ref{sec:LinearOperators} presents some background material on linear operators and their adjoints which play a key role in developing kernel representations for Hilbert spaces. Section \ref{sec:subspaceValMaps} presents the notion of a subspace valued map that plays a key role in the proof of the generalized representer theorem. We introduce here the notion of super additive subspace valued maps that replaces the notions of quasilinear, idempotent maps used in prior counterparts of the theorem and provides a necessary and sufficient condition for subspace valued maps to preserve the structure of closed vector subspaces. Section \ref{sec:Orthomonotonicity} presents the notion of Orthomonotone functionals with respect to a subspace valued map. The $\ell_1$ regularizer is shown here to be orthomonotone with respect to a non trivial subspace valued map that enables the application of the Generalized theorem to  $\ell_1$ regularizing problems. The Generalized Representer Theorem giving necessary and sufficient conditions for the existence of a representer for learning Hilbert space-valued functions is then presented in 
Section \ref{sec:GRT}. Section \ref{sec:examples} presents examples of learning algorithms covered by this extension. The appendix provides proofs for lemmas used the paper and provides definitions for quasilinear, idempotent and $r$-regular subspace valued maps for reference.
\section{Preliminaries}
\label{sec:prelim} 
The notions of continuous linear operators, adjoint linear operators, subspace valued maps and orthomonotone functionals are introduced.
\subsection{Continuous Linear Operators}
\label{sec:LinearOperators}
Let $\mathcal{L}_{\mathcal{H},\mathcal{Z}}$ be the space of continuous linear operators $L:\mathcal{H}\to\mathcal{Z}$ for arbitrary Hilbert spaces $\mathcal{H}$ and $\mathcal{Z}$.
Let $\langle \cdot,\cdot \rangle_{\mathcal{H}}$, $\langle \cdot,\cdot \rangle_{\mathcal{Z}}$ be the inner products defined on $\mathcal{H}$ and $\mathcal{Z}$ respectively.
For any fixed $L\in\mathcal{L}_{\mathcal{H},\mathcal{Z}}$, by the Riesz representation theorem on Hilbert spaces, there exists a unique continuous linear operator $L^*:\mathcal{Z}\to\mathcal{H}$, called the adjoint to $L$, such that 
\begin{equation}
\langle z, Lf \rangle_{\mathcal{Z}} = \langle f,L^* z  \rangle_{\mathcal{H}} \qquad \forall z\in \mathcal{Z}, \forall f \in \mathcal{H}
\end{equation}
\citep[see][Theorem 5.4.2, for a formal proof]{conway_course_abstract}

To characterize the null space of a linear operator $L\in\mathcal{L_{\mathcal{H},\mathcal{Z}}}$ consider the following lemma,
\begin{lemma}
	\label{lem:orthonullspace}
	Let $\mathcal{N}_L$ be the null space of $L\in\mathcal{L_{\mathcal{H},\mathcal{Z}}}$ and  $\mathcal{N}_L^\perp$ be its orthogonal complementary space, then,
	$$\mathcal{N}_L^\perp=\{L^*z:z\in\mathcal{Z}\}$$
	\begin{proof}
		Let $P_L:=\{L^*z:z\in\mathcal{Z}\}$ and note that $P_L$ is a closed subspace of $\mathcal{H}$.
		To characterize the null space of $L$ observe that $Lg = 0$ if and only if, $\forall z\in\mathcal{Z}$,
		$\langle Lg,z\rangle_\mathcal{Z} = \langle g,L^*z\rangle_\mathcal{H} = 0$. Thus the null space is characterized by $\mathcal{N}_L:=\{g\in\mathcal{H}:\langle g,L^*z\rangle_\mathcal{H} = 0, \quad \forall z\in\mathcal{Z}\} = P_L^\perp$. Then $\mathcal{N}_L^\perp=P_L=\{L^*z:z\in\mathcal{Z}\}$. 
	\end{proof}
\end{lemma}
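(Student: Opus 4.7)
The plan is to first characterize $\mathcal{N}_L$ as the orthogonal complement of $P_L := \{L^*z : z \in \mathcal{Z}\}$, then pass to the orthogonal complement on both sides. For any $g \in \mathcal{H}$, one has $Lg = 0$ if and only if $\langle Lg, z\rangle_\mathcal{Z} = 0$ for every $z \in \mathcal{Z}$, since an element of a Hilbert space is zero exactly when it is orthogonal to every vector. Invoking the adjoint identity $\langle Lg, z\rangle_\mathcal{Z} = \langle g, L^*z\rangle_\mathcal{H}$ rewrites this as $\langle g, L^*z\rangle_\mathcal{H} = 0$ for all $z \in \mathcal{Z}$, i.e.\ $g \perp P_L$. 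This gives $\mathcal{N}_L = P_L^\perp$.

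Next, I would apply the standard double orthogonal complement identity in a Hilbert space: for any nonempty subset $S \subseteq \mathcal{H}$, $(S^\perp)^\perp$ equals the closure of the linear span of $S$. Since $L^*$ is linear, $P_L$ is already a linear subspace, so $(P_L^\perp)^\perp = \overline{P_L}$. Combining with the previous step yields $\mathcal{N}_L^\perp = \overline{P_L}$.

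The main obstacle is the last step: upgrading $\overline{P_L} = P_L$, i.e.\ showing that $P_L = \text{Range}(L^*)$ is closed in $\mathcal{H}$. Mere continuity of $L$ (and hence of $L^*$) is \emph{not} sufficient in general — the range of a bounded operator between Hilbert spaces need not be closed. The cleanest route is to invoke the closed range theorem, which tells us that $\text{Range}(L^*)$ is closed if and only if $\text{Range}(L)$ is closed, and then to argue that in the setting of interest (for instance, when $\mathcal{Z}$ is finite dimensional, or when $L$ has closed range by construction) this holds. Without such a hypothesis, the statement should properly be read with $\overline{P_L}$ in place of $P_L$; the author's one-line assertion that $P_L$ is closed is the step that carries the real content and deserves an explicit justification or additional assumption.
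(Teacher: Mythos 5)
Your argument is the same as the paper's: characterize $\mathcal{N}_L$ as $P_L^\perp$ via the adjoint identity, then pass to orthogonal complements. Where you differ is in refusing to take for granted that $P_L=\operatorname{Range}(L^*)$ is closed, and you are right to refuse: the paper simply asserts ``note that $P_L$ is a closed subspace of $\mathcal{H}$'' with no justification, and that assertion is false for general bounded operators between infinite-dimensional Hilbert spaces. A standard counterexample is a self-adjoint compact operator with trivial kernel (e.g.\ the diagonal operator with eigenvalues $1/n$ on $\ell^2$): there $\mathcal{N}_L=\{0\}$, so $\mathcal{N}_L^\perp=\mathcal{H}$, yet $\operatorname{Range}(L^*)=\operatorname{Range}(L)$ is a proper dense subspace. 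The correct general statement is $\mathcal{N}_L^\perp=\overline{P_L}$, exactly as you say, and equality with $P_L$ itself requires an additional hypothesis such as closed range (automatic when $\mathcal{Z}$ is finite dimensional, which covers the evaluation-operator examples the paper cares most about, but not the general setting the lemma claims). So your proposal is not merely equivalent to the paper's proof; it correctly locates the one step in that proof that carries real content and is left unproved. If you wanted to repair the lemma rather than weaken its conclusion, you would need to add a closed-range assumption on $L$ and invoke the closed range theorem, as you indicate.
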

Thus the adjoint operator plays a key role in characterizing the null space of an operator $\mathcal{N}_L$ and its orthogonal complementary space $\mathcal{N}_L^\perp$.
\begin{corollary}
	\label{cor:nullspaceorth}
	Given a set of operators $L_1,\dots,L_m$, the joint null space is $\mathcal{N}_{L_1,\dots,L_m} = \mathcal{N}_{L_1}\cap\dots\cap\mathcal{N}_{L_m}$ and $\mathcal{N}_{L_1,\dots,L_m}^\perp = \Close(\mathcal{N}_{L_1}^\perp+\dots+ \mathcal{N}_{L_m}^\perp) = \Close(\{\sum_{i=1}^{m}L_i^*z_i:z_i\in\mathcal{Z}\})$.
\end{corollary}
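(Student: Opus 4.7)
The first equality $\mathcal{N}_{L_1,\dots,L_m} = \mathcal{N}_{L_1}\cap\dots\cap\mathcal{N}_{L_m}$ is immediate from the definitions: by the joint null space we mean the set of $g\in\mathcal{H}$ satisfying $L_ig=0$ for every $i$, and this is exactly the intersection of the individual null spaces.

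The plan for the second equality is to combine two ingredients. First, a standard Hilbert-space fact about closed subspaces: for any closed subspaces $V_1,\dots,V_m\subseteq\mathcal{H}$, one has $(V_1\cap\dots\cap V_m)^\perp=\Close(V_1^\perp+\dots+V_m^\perp)$. I would establish this for $m=2$ by observing that $(A+B)^\perp=A^\perp\cap B^\perp$ always holds (a direct inner-product argument), so taking complements gives $(A^\perp+B^\perp)^\perp=A^{\perp\perp}\cap B^{\perp\perp}=A\cap B$ when $A,B$ are closed; then applying $\perp$ once more and using $W^{\perp\perp}=\Close(W)$ for any subspace $W$ yields $(A\cap B)^\perp=\Close(A^\perp+B^\perp)$. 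Induction on $m$ then extends this to arbitrary finite intersections.

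Second, I would apply this to $V_i:=\mathcal{N}_{L_i}$, which are closed because each $L_i$ is continuous so $\mathcal{N}_{L_i}=L_i^{-1}(\{0\})$ is the preimage of a closed set. Substituting the characterization from Lemma \ref{lem:orthonullspace}, $\mathcal{N}_{L_i}^\perp=\{L_i^*z_i:z_i\in\mathcal{Z}\}$, the sum $\mathcal{N}_{L_1}^\perp+\dots+\mathcal{N}_{L_m}^\perp$ becomes exactly $\{\sum_{i=1}^m L_i^*z_i:z_i\in\mathcal{Z}\}$, and taking closure gives the claimed identity.

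The only subtle point is remembering that the closure is genuinely needed: a sum of closed subspaces in an infinite-dimensional Hilbert space need not itself be closed, which is precisely why the statement features $\Close(\cdot)$ rather than just the algebraic sum. Everything else is routine orthogonal-complement bookkeeping combined with the previously established Lemma \ref{lem:orthonullspace}.
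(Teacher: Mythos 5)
Your proof is correct and follows the route the paper intends: the paper states this as an unproved corollary of Lemma \ref{lem:orthonullspace}, and your argument simply supplies the standard details — the identity $(V_1\cap\dots\cap V_m)^\perp=\Close(V_1^\perp+\dots+V_m^\perp)$ for closed subspaces via $(A+B)^\perp=A^\perp\cap B^\perp$ and biorthogonal complements, combined with the lemma's characterization of each $\mathcal{N}_{L_i}^\perp$. Your remark that the closure is genuinely needed (a sum of closed subspaces need not be closed) is also the right observation and explains why $\Close(\cdot)$ appears in the statement.
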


\subsubsection{Adjoint for operators of common interest}
Below we show a few examples of adjoint operator for operators commonly seen in learning algorithms.
\begin{example}{Evaluation Operators\\}
	\label{ex:EvalOp}
	Let $\mathcal{H}$ be a space of functions $f:\mathcal{X}\to\mathcal{Z}$.
	Then a parametric linear evaluation operator $L_{x}:\mathcal{H}\to\mathcal{Z}$ is given by $L_{x}(f) = f(x)$ for some fixed parameter $x\in\mathcal{X}$. This operator commonly occurs in machine learning and data fitting problems where $x$ is the training input data and $f(x)$ gives a predicted value for the output in $\mathcal{Z}$.
	The adjoint $L_{x}^*:\mathcal{Z}\to\mathcal{H}$ can be found as follows.
	
	Note that by definition of $L_x$ and its adjoint $L_{x}^*$, $\forall g\in\mathcal{H},z\in\mathcal{Z}$,  $\langle L_{x}^*z,g \rangle_\mathcal{H} = \langle L_{x}g,z\rangle_{\mathcal{Z}}$, i.e., $\langle L_{x}^*z,g \rangle_\mathcal{H} = \langle g(x),z\rangle_\mathcal{Z}$.
	When $\mathcal{H}$ is a reproducing kernel Hilbert space with kernel $K$, $L_{x}^*$ is well defined and coincides with the definition of the RKHS kernel \citep[see][Definition 2.1]{Micchelli:2005:LVF:1119610.1119620}. Thus RKHS spaces provide a case where the adjoint operator for evaluation operators is well defined and $L_x^*=K(\cdot,x)$. 
\end{example}

\begin{example}{Linear Transformations of explicit basis\\}
	Given a fixed vector valued function $\phi:\mathcal{X}\to\mathcal{Y}$. Let $\mathcal{H},\mathcal{Z}$ be arbitrary Hilbert spaces and let $\ell:\mathcal{H}\to \mathcal{L}_{\mathcal{Y},\mathcal{Z}}$ be a linear map from $\mathcal{H}$ to continuous linear operators mapping $\mathcal{Y}$ to $\mathcal{Z}$. Then we can define a continuous linear operator $L_{x,\phi}:\mathcal{H}\to\mathcal{Z}$ for any $W\in\mathcal{H}$ as $L_{x,\phi}(W):=\ell(W)\phi(x)$. The adjoint operator must then satisfy $\langle L_{x,\phi}^*z,W\rangle_\mathcal{H}=\langle \ell(W)\phi(x),z  \rangle_\mathcal{Z}$.
	\label{Ex:LinearTransform}
	
	\begin{subexample}{Finite dimensional example\\}
		\label{ex:finiteexpbasis}
			Let $\mathcal{Y}=\mathbb{R}^n$, $\mathcal{Z}=\mathbb{R}^k$ and $\mathcal{H}=\mathbb{R}^{n\times k}$. Then $\phi(x)\in\mathbb{R}^n$ and let $\ell(W):=W^T$ which is an operator from $\mathcal{Y}$ to $\mathcal{Z}$. Then for any $W\in\mathcal{H}$, $L_{x,\phi}(W)= W^T\phi(x)$.
		
		Let the inner product on $\mathcal{H}$ be the Forbenius inner product of matrices, i.e, $\langle w_1,w_2  \rangle_{\mathcal{H}}=\trace(w_1^Tw_2)$. Let inner product on $\mathcal{Z}$ be $\langle z_1,z_2\rangle_\mathcal{Z}=z_1^Tz_2$. Then for the adjoint operator $\langle L_{x,\phi}^*z,W\rangle_\mathcal{H}=\langle W^T\phi(x),z  \rangle_\mathcal{Z}$, $\forall z\in \mathcal{Z}$ implying $\trace(W^T L_{x,\phi}^*z) = \phi(x)^TWz$. Noting then that $\phi(x)^TWz = \trace(\phi(x)^TWz)=\trace(z^TW^T\phi(x))=\trace(W^T\phi(x)z^T)$, we can define $L_{x,\phi}^*z:=\phi(x)z^T$. Further we know that this must be the unique adjoint operator for the defined inner products \citep[since uniqueness of the adjoint is guaranteed by][Theorem 5.4.2]{conway_course_abstract}.
	\end{subexample}
	
	\begin{subexample}{Infinite dimensional example\\}
		Let $\mathcal{X}=\mathbb{R}^{n}$, $\mathcal{U}=\mathbb{R}^m$ and $\mathcal{H}=\mathbb{R}^{m\times N}$. Let $\{\mathcal{Y}_i: i = 1,\dots,N\}$ be a collection of RKHS spaces of functions $f:\mathcal{X}\to\mathcal{U}$ with kernels $K_1,\dots,K_N$. Let $\mathcal{Y}=\mathcal{Y}_1 \times \dots \times \mathcal{Y}_N$ and $\phi(x)=\begin{pmatrix}
		K_1(\cdot,x) \\ K_2(\cdot,x) \\ \vdots \\ K_N(\cdot,x)
		\end{pmatrix}$ and let $\mathcal{Z}$ be some infinite dimensional Hilbert space of functions $g:\mathcal{X}\to\mathcal{U}$ with inner product $\langle g_1,g_2 \rangle_{\mathcal{Z}} = \int_{\mathcal{X}}\langle g_1(x),g_2(x)\rangle_\mathcal{U}dx$. Then we can define a continuous linear operator $L_{x,\phi}:\mathcal{H}\to\mathcal{Z}$ for any $W\in\mathcal{H}$ as $L_{x,\phi}(W):=\sum_{i=1}^{N}K_i(\cdot,x)W_i$, with $W_i$ denoting the $i^{th}$ column of $W$. Using the Forbenius inner product on $\mathcal{H}$, $\langle L_{x,\phi}^*g , W\rangle_{\mathcal{H}}=\langle L_{x,\phi}(W),g \rangle_{\mathcal{Z}}=\sum_{i=1}^{N}\int_{\mathcal{X}}\langle K_i(y,x)W_i,g(y) \rangle_{\mathcal{U}} dy = \sum_{i=1}^{N}\int_{\mathcal{X}} W_i^TK_i(x,y)g(y)dy$. Also note that $\langle L_{x,\phi}^*g , W\rangle_{\mathcal{H}}= \trace(W^T L_{x,\phi}^*g) = \sum_{i=1}^{N} W_i^T[L_{x,\phi}^*g]_i$. Thus $[L_{x,\phi}^*g]_i = \int_{\mathcal{X}} K_i(x,y)g(y)dy$ gives the adjoint.
	\end{subexample}
\end{example}


\begin{example}{Derivative Operator\\}
	\label{ex:derOp}
	Let $\mathcal{H}$ be the space of once differentiable functions $f:\mathcal{X}\to \mathcal{Z}$ with compact support. Let $\mathcal{Z}=\mathbb{R}^m$ and $\mathcal{X}=\mathbb{R}^n$. Let $\mathcal{Y}$ be a space of $\mathbb{R}^{m\times n}$ valued functions and let $\mathcal{W}=\mathbb{R}^{m\times n}$. Let $D:\mathcal{H}\to\mathcal{Y}: D(f)
	 := (\partial_{x_1} f, \dots, \partial_{x_n}f)$ be a derivative operator and 
	let $D_x:\mathcal{H}\to \mathcal{W}$ be the evaluation of the differential at some $x\in\mathcal{X}$. Let the inner product on $\mathcal{W}$ be given by the Forbenius matrix inner product and let the inner product on $\mathcal{Y}$ be given by $\langle f,g \rangle_{\mathcal{Y}}=\int_{\mathcal{X}}\langle f(x),g(x) \rangle_{\mathcal{W}}dx$. 
	Then for any $w\in\mathcal{W}$, \[\left\langle {Df}(x), w \right\rangle_{\mathcal{W}} := \trace\left({Df}(x)^T w\right) =  \sum_{i=1}^{n} \sum_{j=1}^{m} \left[{Df}(x)\right]_{ji}[w]_{ji} \]
	Integrating by parts,
	\begin{eqnarray*}
		\left\langle {Df}, g \right\rangle_{\mathcal{Y}} &=& \sum_{j=1}^{m} \sum_{i=1}^{n} \int_{\mathcal{X}} [g(t)]_{ji} \left[{Df}(t)\right]_{ji}dt\\
		 &=& \sum_{i=1}^{n} \sum_{j=1}^{m} \left[ [f(t)]_{j}[g(t)]_{ji}\bigg|_\mathcal{\partial X} - \int_{\mathcal{X}}[f(t)]_{j}\partial_{x_i}\left[g\right]_{ji}(t)dt \right]\\
		&=&  - \sum_{i=1}^{n} \sum_{j=1}^{m} \int_{\mathcal{X}}[f(t)]_j\partial_{x_i}\left[g\right]_{ji}(t)dt
	\end{eqnarray*}
	Assuming compact support for $f$, the boundary terms on $\partial\mathcal{X}$ go to zero. Comparing to the inner product $\langle D^*g, f\rangle_\mathcal{H} :=\int_{\mathcal{X}} f(t)^T(D^*g)(t) dt = \int_{\mathcal{X}}\sum_{j=1}^{m} [f(t)]_j [D^*g]_{j}(t)dt$,
	\[ [D^*g]_j = - \sum_{i=1}^{n} \partial_{x_i} [g]_{ji} \quad \textrm{for } j=\{1,\dots,m\}\]
\end{example}

\subsection{Subspace Valued Maps}
\label{sec:subspaceValMaps}
The notion of subspace valued maps was introduced in \cite{Argyriou:2014:UVR:3044805.3044976} for the proof of their generalized representer theorem. We introduce below the same notion and present further properties of such functions when composed with linear operators. 
\begin{remark}{(Extension to previous works)}
	The subspace valued maps used in \cite{Argyriou:2014:UVR:3044805.3044976} were required to have a finite rank property called $r$-regularity. With our general formulation for infinite dimensional outputs this property is no longer required. The notions of quasilinear and idempotent subspace valued maps were also used in the previous work and we provide an alternative characterization with ``super additivity" here to better suit the requirements of the problem at hand. The differences between idempotent, quasilinear maps and super additive maps are explained in further remarks below. The terms,  Inclusive and Closed subspace valued map are also introduced here and it is noted that all subspace valued maps considered in the previous work were Inclusive and Closed without using these terms explicitly. The notions of quasilinear, idempotent and $r$-regular subspace valued maps from \cite{Argyriou:2014:UVR:3044805.3044976} are defined in the Appendix. 
\end{remark}	
\begin{remark}{(Notation)\\}
	Let $U$ be a vector space on some field $\mathcal{K}$ with an addition operation $+_U$ and a scalar multiplication $\circ_U$.
	Let $A, B$ be two subsets of $U$. Then for any $\lambda\in\mathcal{K}$, we denote by $\lambda \cdot A$ a new set $A':=\{\lambda \circ_U a:a\in A\}$. Similarly, $A+B:=\{a+_U b:a\in A, b \in B\}$. Let $\mathbb{F}=2^U$ be the power set of $U$.  We denote $\Span(\mathbb{F}):=\{\cup_{\lambda\in\mathcal{K}}\lambda A:A\in\mathbb{F}\}$.
\end{remark} 

\begin{definition}{(Subspace valued map)\\}
	For any given set of sets $\mathbb{F}$,
	a map $S:\mathbb{F}\to\Span(\mathbb{F})$ is called \textbf{subspace valued}.
\end{definition}

\begin{definition}{(Inclusive map)\\}
	We call a subspace valued map $S:\mathbb{F}\to\Span(\mathbb{F})$ \textbf{inclusive} if for all vector subspaces $A\subseteq U$, $A\subseteq S(A)$
\end{definition}
\begin{definition}{(Closed map)\\}
	We call a subspace valued map $S:\mathbb{F}\to\Span(\mathbb{F})$ \textbf{closed} if, for all closed sets $A\in\mathbb{F}$, $S(A)$ is a closed set, i.e., for all convergent sequences (nets) $\{v_n\in S(A)\}$ converging to $v$ in norm, $v\in S(A)$.
\end{definition}
\begin{definition}{(Super additive map)\\}
	A map $S:\mathbb{F}\to\Span(\mathbb{F})$ is called \textbf{super additive} if for all vector subspaces $A,B\subseteq U$, \[S(A)+S(B)\subseteq S(A+B)\]
\end{definition}
\begin{definition}{(Orthogonal subspace)\\}
	Let $(U,+_U,\circ_U)$ be associated with an inner product $\langle \cdot, \cdot \rangle_U$, then 
	for any $A\in\mathbb{F}$, we define $S(A)^\perp:=\{b\in U: \forall a\in S(A),\langle a,b\rangle_U=0\}$
\end{definition}
\begin{remark}{(Extending maps on sets to maps on members)\\}
	A subspace valued map $S:\mathbb{F}\to\Span(\mathbb{F})$, can be extended for evaluation for any $u\in U$ by interpreting $S(u)$ as $S(\{u\})$. This is useful for shortening notation when talking simultaneously of evaluating $S$ on sets as well as individual members of $U$.
	\label{rem:InheritedF}
\end{remark}
\begin{example}{{Subspace valued maps}}
	\label{ex:subvalmaps}
	\begin{enumerate}
		\item $S_{\mathbb{R}}(A):=\{\lambda a: a\in A,\lambda\in\mathbb{R} \}$ is an inclusive, closed, super additive subspace valued map 
		\item Consider $U=\mathbb{R}^2$ and let $\theta$ be a fixed angle in $(0,\pi)$ radians. Denote by $R_\theta:U\to U$ a rotation transform on a vector in $\mathbb{R}^2$ that rotates the vector by $\theta$ radians clockwise. Then $S_\theta(A):=\{\lambda R_{\theta}a: a\in A, \lambda\in\mathbb{R} \}$ is a closed, super additive subspace valued map. However $S_{\theta}$ is not inclusive. 
		\item Consider $U$, $\theta$ and rotation operators $R_{(\cdot)}$ from the previous example. Let $S_\phi(A):=\{\lambda R_{\phi}a: a\in A, \lambda\in\mathbb{R},\phi\in(-\theta,\theta) \}$ and $S_\psi(A):=\{\lambda R_{\psi}a: a\in A, \lambda\in\mathbb{R},\psi\in[-\theta,\theta] \}$.  Both $S_\phi$ and $S_\psi$ are inclusive and super additive. $S_\psi$ is closed, but $S_\phi$ is not.
		\item Consider $U$, $\theta$ and rotation operators $R_{(\cdot)}$ from the previous example and $S_{\mathbb{R}}$ subspace valued map from the first example. Let $S_{\pi/2}(A):=\{ \lambda R_{\pi/2}a: a\in A, \lambda\in\mathbb{R} \} \cup S_{\mathbb{R}}(A)$. $S_{\pi/2}$ is inclusive and closed but not super additive. 
		\item Let $\mathcal{L}_{U,U}$ be a closed vector space of continuous linear operators $L:U\to U$. Then $S_\mathcal{L}(A):=\{La: a\in A, L\in\mathcal{L}_{U,U}\}$ is closed, inclusive and super additive. 
		\item Let $U=\mathbb{R}^n$ and $E=\{e_1,\dots,e_n\}$ be the standard orthonormal basis for $\mathbb{R}^n$. Then $S_{null}(A):=\{\lambda\langle a,e_i\rangle_Ue_i:e_i\in E,a\in A,\lambda\in\mathbb{R}\}$ is a closed subspace valued map. $S_{null}$ is not inclusive or super additive. 
		\item Let $U=\mathbb{R}^n$ and $E=\{e_1,\dots,e_n\}$ be the standard orthonormal basis for $\mathbb{R}^n$. Then $S_{proj}(A):=\{\sum_{i=1}^{n}\lambda_i\langle a,e_i\rangle_Ue_i:e_i\in E,a\in A,\lambda_i\in\mathbb{R}\}$ is an inclusive, closed, super additive subspace valued map. 
	\end{enumerate}
\end{example}
Note that if $A\subseteq U$ is a vector subspace of $U$, $S(A)$ need not be a vector space of $U$ as well. For example $S_\theta, S_{\pi/2}, S_{null}$ are all valid subspace valued maps but do not always map a vector subspace $A$ to another vector subspace. Further $S_{\pi/2}$ is an example of an inclusive, closed, quasilinear, idempotent map for which $A$ being a vector space does not imply $S(A)$ to be a vector space, showing that quasilinearity and idempotence are not sufficient to preserve a vector space structure. This was also noted in \cite[Remark 2.1]{Argyriou:2014:UVR:3044805.3044976}.
To ensure that $S(A)$ remains a closed vector subspace of $U$ it is necessary and sufficient for $S$ to be closed and super additive as shown by the lemma below.

\begin{lemma}
	For all closed vector subspace $A\subseteq U$, $S(A)$ is a closed, vector subspace of $U$ if and only if $S$ is a closed, super additive subspace valued map.
	\begin{proof}
		Note that for any $a,b\in S(A)$ there exists vector subspaces $v_1,v_2\subseteq A$ such that $a\in S(v_1)$ and $b\in S(v_2)$. Then for any $\lambda_1,\lambda_2\in\mathbb{R}$, $\lambda_1 a + \lambda_2 b \in S(v_1)+S(v_2)$. If $S$ is super additive then $S(v_1)+S(v_2)\subseteq S(v_1+v_2)$. Also since $v_1,v_2\subseteq A$ are subspaces in $A$, $v_1+v_2\subseteq A$, implying $S(v_1+v_2)\subseteq S(A)$. Thus if $S$ is super additive, for any $\lambda_1,\lambda_2\in\mathbb{R}$, $a,b\in S(A)$, $\lambda_1 a + \lambda_2 b \in S(A)$. Thus $S(A)$ is a vector space. Further for $S(A)$ to be closed, $S$ must be a closed.
		
		To show necessity of super additive $S$, we proceed by contradiction. Let $S$ not be super additive but $S(A)$ be vector subspace for all vector subspaces $A$. Then there exist a vector subspace $A\subseteq U$ and subspaces $v_1,v_2\subseteq A$ such that $S(v_1)+S(v_2) \nsubseteq S(v_1+v_2)$. But $v_1+v_2$ is a vector subspace of $A$ and $v_1,v_2\subseteq v_1+v_2$. Thus $S(v_1)\subseteq S(v_1+v_2)$ and $S(v_2)\subseteq S(v_1+v_2)$. Also since $v_1+v_2$ is a vector space and $S(v_1+v_2)$ is a vector space by assumption, then $S(v_1)\subseteq S(v_1+v_2)$, $S(v_2)\subseteq S(v_1+v_2)$ implies $S(v_1)+S(v_2)\subseteq S(v_1+v_2)$, which contradicts the assumption of $S$ not being super additive. Thus $S(A)$ is a closed vector space for all closed, vector space $A$ if and only if $S$ is super additive.
	\end{proof}
\end{lemma}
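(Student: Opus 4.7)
The plan is to prove both directions of the biconditional, starting with the easier sufficiency argument and then tackling necessity by contradiction.

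For sufficiency ($\Leftarrow$), assume $S$ is closed and super additive, and let $A$ be a closed vector subspace of $U$. I want to show $S(A)$ is a closed vector subspace. Topological closedness is immediate from $S$ being a closed map. For closure under addition, since $A$ is a vector subspace we have $A+A=A$, so super additivity gives $S(A)+S(A)\subseteq S(A+A)=S(A)$; thus $a+b\in S(A)$ whenever $a,b\in S(A)$. For closure under scalar multiplication, I would appeal to the defining property that $S(A)\in\Span(\mathbb{F})$, which by the notation remark means $S(A)=\bigcup_{\lambda\in\mathcal{K}}\lambda B$ for some $B\in\mathbb{F}$; any element of such a union is automatically stable under scaling, so $\mu v\in S(A)$ for every $v\in S(A)$ and $\mu\in\mathcal{K}$.

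For necessity ($\Rightarrow$), assume $S(A)$ is a closed vector subspace whenever $A$ is a closed vector subspace of $U$. That $S$ is a closed map is simply a restatement of the hypothesis on the topological side. To obtain super additivity, I would argue by contradiction: suppose there exist vector subspaces $v_1,v_2\subseteq U$ with $S(v_1)+S(v_2)\not\subseteq S(v_1+v_2)$. Take $A:=v_1+v_2$ (or its closure, if the hypothesis must be applied to a closed subspace); then $A$ is a vector subspace containing both $v_1$ and $v_2$, and by hypothesis $S(A)$ is a vector subspace. If I can show $S(v_1)\subseteq S(A)$ and $S(v_2)\subseteq S(A)$, then the vector-space property of $S(A)$ forces $S(v_1)+S(v_2)\subseteq S(A)=S(v_1+v_2)$, contradicting the assumption.

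The main obstacle will be justifying the monotonicity step $v_i\subseteq v_1+v_2\Rightarrow S(v_i)\subseteq S(v_1+v_2)$. This is not built into the bare definition of a subspace valued map, so I would need either to treat it as a tacit standing convention (consistent with how $S$ is used in the examples), or to derive it from an auxiliary application of super additivity by writing, e.g., $v_1+v_2=v_1+v_2$ and exploiting $S(v_1)+S(\{0\})\subseteq S(v_1)$ type identities together with the $\Span(\mathbb{F})$ structure. I expect essentially all of the real work of the proof to sit in this step; once monotonicity is available, both the sufficiency and necessity arguments collapse to short chains of set inclusions.
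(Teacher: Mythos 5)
Your proposal is correct and follows the same overall structure as the paper's proof, with one genuine (if small) improvement and one honest flag that deserves comment. In the sufficiency direction you argue directly from $A+A=A$ to get $S(A)+S(A)\subseteq S(A+A)=S(A)$, and obtain scalar stability from the $\Span(\mathbb{F})$ form of the codomain; the paper instead decomposes $a,b\in S(A)$ as elements of $S(v_1),S(v_2)$ for unspecified sub-subspaces $v_1,v_2\subseteq A$ and then invokes $S(v_1+v_2)\subseteq S(A)$, which is more roundabout and rests on an existence claim it does not justify --- your version is exactly the special case $v_1=v_2=A$ and is cleaner. The necessity direction in your proposal is identical to the paper's: contradiction via $A:=v_1+v_2$, using that $S(A)$ is a vector space together with $S(v_i)\subseteq S(v_1+v_2)$. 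The monotonicity step you single out as the main obstacle ($v\subseteq w\Rightarrow S(v)\subseteq S(w)$) is indeed the crux, and you should know that the paper does not resolve it either: it simply asserts $S(v_i)\subseteq S(v_1+v_2)$ from $v_i\subseteq v_1+v_2$, which does not follow from the bare definition of a subspace valued map (all of the paper's examples happen to satisfy it, since they are of the form $\{f(a,\cdot):a\in A\}$, but it is a tacit standing assumption rather than a proved fact). Your further observation that $v_1+v_2$ need not be closed, so the hypothesis --- stated only for closed subspaces --- may not directly apply to it, is likewise a real issue the paper glosses over. In short, your instincts are sound, and the remaining work you identify is work the paper's own proof also leaves undone.
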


The notions of quasilinear and idempotent maps from prior work are related to the notion of super additivity by noting that for any quasilinear, idempotent $S$, $S_{sup}(A):=\sum_{w\in A}S(w)$ can be defined as the corresponding super additive map.

Another property that is of interest for us is the preservation of the null space for a collection of operators under a subspace valued map. Formally we define this property as follows,
\begin{definition}{(Null space preserving map)\\}
	Let $L_1,\dots,L_m$ be continuous linear operators and let $A = \mathcal{N}_{L_1,\dots,L_m}^\perp$ be the orthogonal subspace to the joint null space of the operators. Then a subspace valued map $S:\mathbb{F}\to\Span(\mathbb{F})$ is called \textbf{Null space preserving} with respect to operators $\{L_1,\dots,L_m\}$ if
	\[ S(A)^\perp \subseteq \mathcal{N}_{L_1,\dots,L_m} \]
\end{definition}

When $S$ is null space preserving with respect to $\{L_1,\dots,L_m\}$, then for all $g\in S(A)^\perp$, $L_ig = 0$ for all $i\in\{1,\dots,m\}$. This fact will be useful later when proving the generalized theorem.

We note that all inclusive maps are null space preserving but not vice versa. However a closed, super additive $S$ is null space preserving if and only if $S$ is inclusive. Lemma \ref{lem:incimpliesnullpres} below shows inclusive maps to be null space preserving. Lemma \ref{lem:Sprojnullspace} shows a null space preserving map that is not inclusive. Finally Lemma \ref{lem:closedQuasInc} shows that inclusivity is necessary and sufficient for $S$ to be null space preserving if $S$ is closed and super additive.
\begin{lemma}{(Inclusive implies null space preserving)\\}
	\label{lem:incimpliesnullpres}
	If $S:\mathbb{F}\to\Span(\mathbb{F})$ is inclusive then it is null space preserving.
	\begin{proof}
		Let $A = \mathcal{N}_{L_1,\dots,L_m}^\perp$. Then note that $S$ being inclusive, implies $A\subseteq S(A)$. Also for all $g\in S(A)^\perp$ and $f\in A$ ($\subseteq S(A)$), $\langle f,g\rangle_{\mathcal{H}}=0$, implying $g\in A^\perp=\mathcal{N}_{L_1,\dots,L_m}$, i.e., $S(A)^\perp \subseteq \mathcal{N}_{L_1,\dots,L_m}$.
	\end{proof}
\end{lemma}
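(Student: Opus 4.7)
The goal is to show that if $S$ is inclusive then, for any finite collection of continuous linear operators $L_1,\dots,L_m$, taking $A = \mathcal{N}_{L_1,\dots,L_m}^\perp$ we have $S(A)^\perp \subseteq \mathcal{N}_{L_1,\dots,L_m}$. My plan is to chain two elementary facts about Hilbert-space orthogonal complements: the inclusion-reversing property of $(\cdot)^\perp$, and the double-complement identity for closed subspaces. Inclusivity feeds the first of these, and the fact that $\mathcal{N}_{L_1,\dots,L_m}$ is a closed subspace (being a finite intersection of pre-images of $\{0\}$ under continuous linear maps) feeds the second.

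The proof would proceed in three short steps. First, set $A := \mathcal{N}_{L_1,\dots,L_m}^\perp$; since $A$ is an orthogonal complement, it is itself a closed vector subspace of $\mathcal{H}$, which is exactly the kind of input for which the inclusivity hypothesis can be invoked. Second, apply inclusivity of $S$ to obtain $A \subseteq S(A)$, and take orthogonal complements of both sides: because $(\cdot)^\perp$ reverses inclusions, this gives $S(A)^\perp \subseteq A^\perp$. Third, observe that $A^\perp = \bigl(\mathcal{N}_{L_1,\dots,L_m}^\perp\bigr)^\perp = \mathcal{N}_{L_1,\dots,L_m}$, using the standard identity $(M^\perp)^\perp = M$ valid for closed subspaces $M$ of a Hilbert space (Corollary~\ref{cor:nullspaceorth} already implicitly uses the closedness of these null spaces). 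Combining the two inclusions yields $S(A)^\perp \subseteq \mathcal{N}_{L_1,\dots,L_m}$, which is exactly the null-space-preserving property.

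Alternatively, one can give the argument pointwise without quoting $(M^\perp)^\perp = M$: pick any $g \in S(A)^\perp$; for every $f \in A$, inclusivity places $f \in S(A)$, so $\langle f,g\rangle_{\mathcal{H}} = 0$; hence $g \in A^\perp = \mathcal{N}_{L_1,\dots,L_m}$, where the final equality again uses that the joint null space is closed. I do not anticipate any real obstacle here; the only subtlety worth double-checking is that $A$ genuinely is a vector subspace so that the inclusivity hypothesis applies, but this is immediate from its definition as an orthogonal complement.
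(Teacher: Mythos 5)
Your proposal is correct and, in its ``alternative'' pointwise form, is essentially identical to the paper's own proof: pick $g\in S(A)^\perp$, use inclusivity to get $A\subseteq S(A)$, conclude $g\perp f$ for all $f\in A$, hence $g\in A^\perp=\mathcal{N}_{L_1,\dots,L_m}$. Your first route via inclusion-reversal of $(\cdot)^\perp$ and the double-complement identity is just a slightly more packaged version of the same argument, so there is nothing substantively different to compare.
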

\begin{lemma}{($S_{null}$ as null space preserving map)\\}
	\label{lem:Sprojnullspace}
	Let $S_{null}$ be the projected subspace value as defined in Example \ref{ex:subvalmaps}-4. $S_{null}$ is null space preserving.
	\begin{proof}
		Let $A = \mathcal{N}_{L_1,\dots,L_m}^\perp$ and $E=\{e_1,\dots,e_n\}$ be the standard basis for $\mathbb{R}^n$. Then $S_{null}(A)^{\perp}=\{\lambda e_j: \forall f\in A, \langle f,e_j\rangle_{\mathbb{R}^n}=0,\lambda\in\mathbb{R}\}$, i.e., $S_{null}(A)^{\perp}\subseteq A^\perp$. Thus $S_{null}(A)^{\perp}\subseteq \mathcal{N}_{L_1,\dots,L_m}$ implying $S_{null}$ is null space preserving.
	\end{proof}
\end{lemma}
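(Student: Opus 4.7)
The plan is to directly characterize $S_{null}(A)^\perp$ using the orthonormal basis $E$, and to show that each coordinate of any $g \in S_{null}(A)^\perp$ either vanishes or lies along a basis vector that is itself in $A^\perp$. Throughout, I set $A := \mathcal{N}_{L_1,\dots,L_m}^\perp$, which by Corollary~\ref{cor:nullspaceorth} is the closed subspace whose orthogonal complement is the joint null space $\mathcal{N}_{L_1,\dots,L_m}$.

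First I would unfold the definition from Example~\ref{ex:subvalmaps}. The set $S_{null}(A) = \{\lambda\langle a,e_i\rangle_U e_i : a\in A,\ e_i\in E,\ \lambda\in\mathbb{R}\}$ consists of scalar multiples of individual basis vectors weighted by coordinate functionals against $A$. A vector $g\in\mathbb{R}^n$ belongs to $S_{null}(A)^\perp$ iff
\[
\lambda\,\langle a,e_i\rangle_U\,\langle g,e_i\rangle_U = 0 \qquad \text{for all } a\in A,\ e_i\in E,\ \lambda\in\mathbb{R}.
\]
Dropping $\lambda$ and varying $a$ and $i$ independently, this is equivalent to the following dichotomy for each $i\in\{1,\dots,n\}$: either (i) $\langle g,e_i\rangle_U = 0$, or (ii) $\langle a,e_i\rangle_U = 0$ for every $a\in A$, i.e.\ $e_i\in A^\perp$.

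Next I would exploit the orthonormal basis expansion $g = \sum_{i=1}^n \langle g,e_i\rangle_U\, e_i$. By the dichotomy, every index $i$ for which the coefficient $\langle g,e_i\rangle_U$ is nonzero must be an index with $e_i \in A^\perp = \mathcal{N}_{L_1,\dots,L_m}$. Since $\mathcal{N}_{L_1,\dots,L_m}$ is a vector subspace, the linear combination $g$ then itself lies in $\mathcal{N}_{L_1,\dots,L_m}$. This gives $S_{null}(A)^\perp \subseteq \mathcal{N}_{L_1,\dots,L_m}$, which is precisely the null-space-preserving condition.

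There is no substantial obstacle here; the only subtle step is extracting the per-coordinate dichotomy, and even that is immediate because each basis direction $e_i$ appears as its own generator of $S_{null}(A)$ (take $a=e_i\in A$ whenever applicable, or more abstractly allow $a$ and $\lambda$ to vary independently). It is worth noting that the same argument actually yields the stronger characterization that $S_{null}(A)^\perp$ is exactly the span of $\{e_i\in E : e_i\in A^\perp\}$, but only the weaker inclusion into $\mathcal{N}_{L_1,\dots,L_m}$ is needed for the lemma, and since $S_{null}$ is not inclusive (as already observed in Example~\ref{ex:subvalmaps}), this example also serves as the promised witness that null-space-preservation is strictly weaker than inclusivity.
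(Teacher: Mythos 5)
Your proof is correct and follows essentially the same route as the paper's: characterize $S_{null}(A)^\perp$ coordinate-wise against the basis $E$ and conclude that it sits inside $A^\perp = \mathcal{N}_{L_1,\dots,L_m}$. Your version is in fact slightly more careful, since the per-coordinate dichotomy plus the basis expansion correctly identifies $S_{null}(A)^\perp$ as the \emph{span} of $\{e_i \in E : e_i \in A^\perp\}$, whereas the paper's displayed set lists only the individual rays $\lambda e_j$.
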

Thus $S_{null}$ provides an example of a subspace valued map that is null space preserving but not inclusive. 
\begin{lemma}{(Closed, super additive and inclusive $S$)\\}
	\label{lem:closedQuasInc}
	Let $S$ be a closed, super additive subspace valued map.
$S$ is null space preserving with respect to operators $\{L_1,\dots,L_m\}$ 
if and only if $S$ is inclusive.
\begin{proof}
	Let $A=\mathcal{N}_{L_1,\dots,L_m}^\perp$.
	If $S$ is inclusive then it is null space preserving, by Lemma \ref{lem:incimpliesnullpres}. On the other hand if $S$ is null space preserving, then $S(A)^\perp \subseteq \mathcal{N}_{L_1,\dots,L_m}$, implying $\mathcal{N}_{L_1,\dots,L_m}^\perp \subseteq (S(A)^\perp)^\perp$. But $\mathcal{N}_{L_1,\dots,L_m}^\perp = A$ and $(S(A)^\perp)^\perp= S(A)$ ($\because$ $S(A)$ is a closed vector subspace and $S(A)$ and $S(A)^\perp$ are orthogonal complementary vector subspaces, by virtue of $S$ being closed and super additive). Thus $A\subseteq S(A)$, i.e. $S$ is inclusive. Thus a closed, super additive $S$ is null space preserving if and only if $S$ is inclusive.
\end{proof}
\end{lemma}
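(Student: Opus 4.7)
The forward direction (inclusive $\Rightarrow$ null space preserving) is already covered by Lemma \ref{lem:incimpliesnullpres} and needs no further argument, so the plan is to focus on the converse. Set $A := \mathcal{N}_{L_1,\dots,L_m}^\perp$; by definition of null space preserving, $S(A)^\perp \subseteq \mathcal{N}_{L_1,\dots,L_m}$.

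The key step is to take orthogonal complements on both sides. Since the operation $(\cdot)^\perp$ reverses inclusions, this gives $\mathcal{N}_{L_1,\dots,L_m}^\perp \subseteq (S(A)^\perp)^\perp$, i.e. $A \subseteq (S(A)^\perp)^\perp$. To simplify the right hand side I would invoke the standard Hilbert space fact that $(V^\perp)^\perp = V$ whenever $V$ is a closed vector subspace, together with the previous lemma: because $S$ is closed and super additive, $S(A)$ is itself a closed vector subspace of $U$, so $(S(A)^\perp)^\perp = S(A)$. Combining the two gives $A \subseteq S(A)$, which is exactly the inclusivity property on the subspace $A$.

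The main obstacle — and really the only non-routine point — is justifying the double complement identity. This relies crucially on the hypotheses: without closedness we could only conclude $A \subseteq \Close(S(A))$, and without super additivity the set $S(A)$ need not even be a vector space, in which case the identity $(V^\perp)^\perp = V$ fails (as illustrated by the map $S_{\pi/2}$ from Example \ref{ex:subvalmaps}). So the structural lemma established just before this one does all the heavy lifting; once it is in hand, the argument is essentially a one-line application of orthogonality.
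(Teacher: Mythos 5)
Your proposal is correct and follows essentially the same route as the paper: the forward direction is delegated to Lemma \ref{lem:incimpliesnullpres}, and the converse takes orthogonal complements of $S(A)^\perp \subseteq \mathcal{N}_{L_1,\dots,L_m}$ and identifies $(S(A)^\perp)^\perp$ with $S(A)$ via the fact that a closed, super additive $S$ sends the closed subspace $A$ to a closed vector subspace. Your remarks on why closedness and super additivity are each needed for the double-complement step match the role the paper assigns to its preceding structural lemma.
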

The null space preserving property and orthogonal complementary nature of $S(A)$ and $S(A)^\perp$ will be key in characterizing the  conditions for the existence of a representer theorem. Thus from here on we will only be interested in closed, super additive and inclusive subspace valued maps. We next establish these properties for subspace valued maps when composed with continuous linear operators.
\subsubsection{Composition with Linear Operators}
	Let $\mathcal{H},\mathcal{Z}$ be two Hilbert spaces. Let $L:\mathcal{H}\to\mathcal{Z}$ be a continuous linear operator. Let $\mathbb{F_\mathcal{Z}}$, $\mathbb{F}_\mathcal{H}$ be the power set of $\mathcal{Z}$,$\mathcal{H}$ respectively. Let $S:\mathbb{F}_\mathcal{Z}\to\Span(\mathbb{F}_\mathcal{Z})$ be an inclusive, closed super additive subspace valued map in $\mathcal{Z}$. We would like to define a new subspace valued map $S_L:\mathbb{F}_\mathcal{H}\to\Span(\mathbb{F}_\mathcal{H})$ in $\mathcal{H}$ that preserves the closed, inclusive and super additive properties of $S$. The following proposition (proof in Appendix) defines one such map.
	
\begin{restatable}{proposition}{SLprop}{(Pulling back subspace valued maps)\\}
	\label{prop:pullbackS}
	Let $S:\mathbb{F}_\mathcal{Z}\to\Span(\mathbb{F}_\mathcal{Z})$ be an inclusive, closed and super additive subspace valued map in $\mathcal{Z}$.
 Then, $S_L:\mathbb{F}_\mathcal{H}\to\Span(\mathbb{F}_\mathcal{H})$ defined as,
 $	S_L(A) := L^* S(L(A))$
is a closed and super additive, (not necessarily inclusive), subspace valued map in $\mathcal{H}$.
\end{restatable}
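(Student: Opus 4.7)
The plan is to verify the two properties separately; super additivity reduces to a one-line algebraic manipulation, while closedness is the real obstacle. For super additivity, fix vector subspaces $A,B\subseteq\mathcal{H}$. I would use that both $L$ and $L^*$ commute with the Minkowski sum of sets, by linearity: $L(A+B)=L(A)+L(B)$, and for any $X_1,X_2\subseteq\mathcal{Z}$, $L^*(X_1+X_2)=L^*(X_1)+L^*(X_2)$ (the ``$\subseteq$'' direction follows by pointwise linearity of $L^*$ on representatives, and ``$\supseteq$'' likewise). Since $L(A),L(B)$ are themselves vector subspaces of $\mathcal{Z}$, super additivity of $S$ yields $S(L(A))+S(L(B))\subseteq S(L(A)+L(B))=S(L(A+B))$; applying $L^*$ and commuting with the sum then gives $S_L(A)+S_L(B)\subseteq S_L(A+B)$.

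For closedness, let $A\subseteq\mathcal{H}$ be closed. The difficulty is two-fold: $L(A)$ need not be closed in $\mathcal{Z}$, so the closedness hypothesis on $S$ cannot be applied to $S(L(A))$ directly, and the continuous linear image of a closed set under $L^*$ need not remain closed. My plan is to reduce the closedness question in $\mathcal{H}$ to one in $\mathcal{Z}$ via Lemma \ref{lem:orthonullspace}, which identifies $L^*(\mathcal{Z})=\mathcal{N}_L^\perp$ as a closed subspace of $\mathcal{H}$; the induced bijection $\mathcal{Z}/\mathcal{N}_{L^*}\to\mathcal{N}_L^\perp$ is then a topological isomorphism by the open mapping theorem, so closedness of $L^*V$ in $\mathcal{H}$ is equivalent to closedness of $V+\mathcal{N}_{L^*}$ in $\mathcal{Z}$. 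Applied to $V=S(L(A))$, the task becomes to show that $S(L(A))+\mathcal{N}_{L^*}$ is closed. Two intermediate observations feed into this: (i) super additivity alone forces monotonicity on subspaces, since $0\in S(W)$ gives $S(V)\subseteq S(V)+S(W)\subseteq S(V+W)=S(W)$ whenever $V\subseteq W$; hence $S(L(A))\subseteq S(\overline{L(A)})$; and (ii) the closedness hypothesis applied to the closed subspace $\overline{L(A)}$ makes $S(\overline{L(A)})$ closed. Inclusivity of $S$ is then invoked to absorb the gap between $S(L(A))$ and $S(\overline{L(A)})$ into the kernel component $\mathcal{N}_{L^*}$, yielding closedness of $S(L(A))+\mathcal{N}_{L^*}$ and hence of $S_L(A)$.

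The main obstacle is this last step: establishing $S(L(A))+\mathcal{N}_{L^*}=S(\overline{L(A)})+\mathcal{N}_{L^*}$, which is where all three hypotheses on $S$---inclusivity, closedness, super additivity---act simultaneously. The failure of $S_L$ to be inclusive in general is expected, since $L^*$ may collapse the portion of $A$ that lies in $\mathcal{N}_L$.
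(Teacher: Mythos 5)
Your super additivity argument is correct and is essentially the paper's: both you and the authors push $A,B$ forward to the subspaces $L(A),L(B)$, apply super additivity of $S$ there, and pull the resulting inclusion back through $L^{*}$ using linearity on Minkowski sums. Nothing to change there.

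The closedness part is where the genuine gap lies, and while you have correctly located the difficulty, your repair does not close it --- and in fact cannot. Two concrete problems. First, the open-mapping-theorem step identifying closedness of $L^{*}V$ in $\mathcal{H}$ with closedness of $V+\mathcal{N}_{L^{*}}$ in $\mathcal{Z}$ requires the range of $L^{*}$ to be closed (so that $\mathcal{Z}/\mathcal{N}_{L^{*}}\to \operatorname{ran}L^{*}$ is a continuous bijection between \emph{complete} spaces); this is not among the hypotheses, and Lemma~\ref{lem:orthonullspace}'s assertion that $\{L^{*}z:z\in\mathcal{Z}\}$ is closed is itself unjustified in general (only its closure equals $\mathcal{N}_L^\perp$). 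Second, the identity $S(L(A))+\mathcal{N}_{L^{*}}=S(\overline{L(A)})+\mathcal{N}_{L^{*}}$ that you flag as the main obstacle is false: $\mathcal{N}_{L^{*}}=(\overline{\operatorname{ran}L})^{\perp}$, whereas every point of $\overline{L(A)}\setminus L(A)$ lies in $\overline{\operatorname{ran}L}$ and is therefore orthogonal to $\mathcal{N}_{L^{*}}$, so the gap between $L(A)$ and its closure cannot be absorbed into the kernel of $L^{*}$. Concretely, take $S=S_{\mathbb{R}}$, $A=\mathcal{H}$, and $L$ compact, injective, self-adjoint with dense range on $\ell^{2}$: then $\mathcal{N}_{L^{*}}=\{0\}$ and $S_L(\mathcal{H})=\operatorname{ran}(L^{*}L)=\operatorname{ran}(L^{2})$ is dense and proper, hence not closed. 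This example also shows that the paper's own one-line justification (``continuous linear operators map closed sets to closed sets'') invokes a false general principle, and that the closedness claim of the proposition needs an additional hypothesis (e.g.\ $L$ of closed range, or finite rank, as in the intended applications) before any proof along your lines, or the paper's, can succeed.
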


$S_L$ is guaranteed to be inclusive if $L$ is unitary (i.e. $L^*=L^{-1}$). For non-unitary $L$, inclusivity of $S_L$ cannot be guaranteed in general, however examples of inclusive $S_L$ can be produced for certain combinations of $\mathcal{H}$ and $S$ definitions. Lemma \ref{lem:Sproj_Dinclusive} in the appendix shows one such combination for the derivative operator $D$ from Example \ref{ex:derOp}.

Further to maintain orthomonotone properties, $L$ must preserve orthogonality for $S_L(A)$ and $S_L(A)^\perp$. If $L$ is unitary ($L^*=L^{-1}$) and $S$ is inclusive, $LS_L(A)=LL^*S(LA)=S(LA)$ and from Lemma \ref{lemma:OrthS} from the appendix, we know for any arbitrary $L$ and $S$, $L(S_L(A)^\perp)\subseteq S(LA)^\perp$. Thus $L(S_L(A))$ and $L(S_L(A)^\perp)$ are orthogonal subspaces when $L$ is unitary. 

For non-unitary $L$, $LS_L(A)\nsubseteq S(LA)$ in general and thus $L(S_L(A))$ and $L(S_L(A)^\perp)$ are not orthogonal subspaces in general. However for certain combinations of $\mathcal{H}$ and $S$ definitions, we can still have $LS_L(A)\subseteq S(LA)$ for non unitary $L$, thus maintaining the orthogonality of subspaces $L(S_L(A))$ and $L(S_L(A)^\perp)$. Lemma \ref{lem:DpresOrtho} in the appendix shows this to be the case for the derivative operator $D$ when $S_{proj}$ (from Example \ref{ex:subvalmaps}) is used as the subspace valued map on $\mathcal{Z}$.
\begin{definition}{(Preserving orthogonality w.r.t. $S_L$)\\}
	\label{def:orhtpresL}
	Let $S_L$ be an inclusive, closed and super additive. $L:\mathcal{H}\to\mathcal{Z}$ is said to preserve orthogonality with respect to $S_L$, if 
	\begin{enumerate}
		\item $LS_L(A) \subseteq S(LA)$
		\item $L(S_L(A)^\perp) \subseteq S(LA)^\perp$
	\end{enumerate}
\end{definition}

\subsection{Orthomonotone Functionals}
\label{sec:Orthomonotonicity}
\begin{definition}
	Let $\mathcal{Z}$ be a Hilbert space. A functional $\Omega:\mathcal{Z}\to\mathbb{R}\cup\{+\infty\}$ is called \textbf{Orthomonotone} with respect to a map $S:\mathbb{F}_\mathcal{Z}\to\Span(\mathbb{F}_\mathcal{Z})$ if \[\forall A\in\mathbb{F}_\mathcal{Z}, \forall f\in S(A),\forall g\in S(A)^\perp,\qquad \Omega(f+g)\geq\max\{\Omega(f),\Omega(g)\}\]
\end{definition}


Consider the subspace valued map $S_{\mathbb{R}}$ from Example \ref{ex:subvalmaps}.
\cite[Theorem 1]{DinuzzoS2012} showed that a functional $\Omega$ is orthomonotone with respect to $S_\mathbb{R}$ if and only if there exists a monotonically increasing functional $h:\mathbb{R}\to\mathbb{R}\cup\{\infty\}$ such that $\Omega(z)=h(||z||), \forall z\in\mathcal{Z}$.
Note that while the above characterization with a monotonically increasing functional restricts its analysis to inner product induced norms, other kinds of orthomonotone functionals can be constructed as well, as shown in the examples below.
\begin{example}{{Orthomonotone functionals}}
	\label{ex:ortho}
	\begin{enumerate}
		\item $\Omega(z) = ||z||^p_{\mathcal{Z}}$, for any $p>0$ is orthomonotone w.r.t. $S_{\mathbb{R}}$
		\item Let $\mathcal{Z}=\mathbb{R}^n$ and $||\cdot||_{1}$ denote the $\ell_1$ norm. Then, $\Omega(z) = ||z||_1$ is orthomonotone w.r.t. $S_{proj}$ ($S_{proj}$ as defined in Example \ref{ex:subvalmaps}).
		\item Consider the space of differentiable functions $\mathcal{H}$ from Example \ref{ex:derOp} and the differential operator $D:\mathcal{H}\to\mathcal{Y}$ defined therein. Then $\Omega(f)=||Df||_{\mathcal{Y}}^2$ is orthomonotone with respect to $S_D:=D^*\circ S_{proj}\circ D$
	\end{enumerate}
%
\end{example}
The proof for the first statement follows directly from \cite[Theorem 1]{DinuzzoS2012} since $\Omega(z) = ||z||^p_{\mathcal{Z}}$, for any $p>0$ is a monotonically increasing function of the inner product induced norm.
The proof for the second statement follows from Theorem \ref{thm:l1ortho} below and
the third statement follows from Theorem \ref{thm:orthoComp}.

Note that the second statement in the example above shows how sparse regularization problems involving the $\ell_1$ norm are also covered by the notion of orthomonotone functionals. The third statement shows the ability to regularize after composition with linear operators that have a non trivial null space.

The orthomonotonicity of $\ell_1$ regularizers is formalized with the following theorem,

\begin{theorem}{{Orthomonotonicity of $\ell_1$ regularizers}\\}
	\label{thm:l1ortho}
	Let $\mathcal{Z}=\mathbb{R}^n$, $S_{proj}$ be the subspace valued map defined in Example \ref{ex:subvalmaps} and let $h:[0,\infty]\to\mathbb{R}\cup\{+\infty\}$ be a monotonic increasing function. Then $\Omega(z)=h(||z||_1)$ is orthomonotone with respect to $S_{proj}$.
	\begin{proof}
		We first show $\Omega(z)=||z||_1$ is orthomonotone w.r.t. $S_{proj}$. The result for monotonic increasing $h$ follows from there.
		
		Let $E=\{e_1,\dots,e_n\}$ be the standard basis for $\mathbb{R}^n$. 
		Note that for any $z\in\mathbb{R}^n$, $S_{proj}(z)=\{\sum_{i=1}^{n}\lambda_i\langle z,e_i \rangle_{\mathbb{R}^n}e_i:e_i\in E,\lambda_i\in\mathbb{R}\}$ and $(S_{proj}(z))^\perp=\{\sum_{j}\lambda_j e_j:\langle z,e_j \rangle_{\mathbb{R}^n}=0,e_j\in E,\lambda_j\in\mathbb{R} \}$. Similarly for a set $A\subset \mathbb{R}^n$, $S_{proj}(A)=\{\sum_{i=1}^{n}\lambda_i\langle z,e_i \rangle_{\mathbb{R}^n}e_i:e_i\in E,\lambda_i\in\mathbb{R},z\in A \}$ and $(S_{proj}(A))^\perp=\{\sum_j\lambda_j e_j: e_j\in E,\lambda_j\in\mathbb{R}, \forall z\in A, \langle z,e_j \rangle_{\mathbb{R}^n}=0 \}$. Now for any $z\in S_{proj}(A)$ and $c\in S_{proj}(A)^\perp$, $|| z+c ||_1 = \sum_{\{i:\langle z,e_i \rangle_{\mathbb{R}^n}\neq 0  \}}|z_i| + \sum_{\{i:\langle z,e_i \rangle_{\mathbb{R}^n}= 0  \}}|c_i|$ with $z_i = \langle z,e_i \rangle_{\mathbb{R}^n}$ and $c_i=\langle c,e_i \rangle_{\mathbb{R}^n}$. Also $||z||_1 = \sum_{i=1}^{n} |z_i| = \sum_{\{i:\langle z,e_i \rangle_{\mathbb{R}^n}\neq 0  \}}|z_i| $ and $||c||_1 = \sum_{i=1}^{n} |c_i| = \sum_{\{i:\langle z,e_i \rangle_{\mathbb{R}^n}= 0  \}}|c_i|$. Thus we see $|| z+ c||_1 = ||z||_1+||c||_1 \geq \max\{||z||_1,||c||_1\}$ $\implies$ $\Omega(z)=||z||_1$ is orthomonotone with respect to $S_{proj}$.

		Now for any monotonically increasing function $h$, for any $a,b\in[0,\infty)$, $a>b$ implies $h(a)>h(b)$. Thus $||z+c||_1\geq \max\{||z||_1,||c||_1\}$ implies $h(||z+c||_1)\geq \max\{h(||z||_1),h(||c||_1)\}$. And thus $\Omega(z)=h(||z||_1)$ is orthomonotone with respect to $S_{proj}$ for any monotonically increasing function $h$.
	\end{proof}
\end{theorem}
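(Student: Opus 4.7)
The plan is to exploit the fact that $S_{proj}$ built from the standard basis breaks every vector in $\mathbb{R}^n$ into coordinate pieces, so $S_{proj}(A)$ and $S_{proj}(A)^\perp$ are automatically supported on disjoint coordinate index sets. Once that coordinate separation is in hand, the additivity of the $\ell_1$ norm over disjoint supports makes the inequality essentially immediate, and the monotone envelope $h$ is handled at the very end.

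First I would make the structure of $S_{proj}(A)$ explicit. For any subset $A \subseteq \mathbb{R}^n$, define the index set $I(A) := \{i \in \{1,\dots,n\} : \langle a,e_i\rangle_{\mathbb{R}^n} \neq 0 \text{ for some } a \in A\}$. Unwinding the definition in Example \ref{ex:subvalmaps}-7, $S_{proj}(A)$ coincides with $\mathrm{span}\{e_i : i \in I(A)\}$ (since each $e_i$ with $i \in I(A)$ can be obtained by an appropriate choice of $\lambda_i$, $a$, and since only these basis vectors ever arise in the sum). Dually, $S_{proj}(A)^\perp = \mathrm{span}\{e_j : j \notin I(A)\}$.

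Second I would use this to reduce the orthomonotone inequality to a coordinate argument. Take any $f \in S_{proj}(A)$ and $g \in S_{proj}(A)^\perp$; by the previous step, $f$ is supported on $I(A)$ and $g$ on its complement. Hence the nonzero coordinates of $f$ and $g$ never overlap, so
\begin{equation*}
\|f+g\|_1 = \sum_{i\in I(A)} |f_i| + \sum_{j\notin I(A)} |g_j| = \|f\|_1 + \|g\|_1 \geq \max\{\|f\|_1,\|g\|_1\},
\end{equation*}
which is exactly orthomonotonicity of $\Omega(z)=\|z\|_1$ with respect to $S_{proj}$.

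Finally, to lift from $\|\cdot\|_1$ to $h(\|\cdot\|_1)$, I would just invoke monotonicity of $h$: since $\|f+g\|_1 \geq \max\{\|f\|_1,\|g\|_1\}$ and $h$ is monotonically increasing on $[0,\infty]$, we get $h(\|f+g\|_1) \geq \max\{h(\|f\|_1),h(\|g\|_1)\}$, giving orthomonotonicity of $\Omega(z)=h(\|z\|_1)$. I do not anticipate any real obstacle here; the only place where care is needed is the characterization of $S_{proj}(A)$ for a \emph{set} $A$ rather than a singleton, but that follows directly from the definition by taking unions of index supports.
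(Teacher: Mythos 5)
Your proposal is correct and follows essentially the same route as the paper: identify the coordinate support sets of $S_{proj}(A)$ and $S_{proj}(A)^\perp$, use the disjointness of those supports to get $\|f+g\|_1=\|f\|_1+\|g\|_1\geq\max\{\|f\|_1,\|g\|_1\}$, and then apply monotonicity of $h$. The only nitpick is that for an arbitrary set $A$ the equality $S_{proj}(A)=\operatorname{span}\{e_i:i\in I(A)\}$ can fail (only the inclusion $\subseteq$ holds in general, since each element uses a single $a\in A$), but the inclusion together with $S_{proj}(A)^\perp=\operatorname{span}\{e_j:j\notin I(A)\}$ is all your argument actually needs.
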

The third statement in example \ref{ex:ortho}, follows from the theorem below, 
\begin{theorem}{Orthomonotone functionals composed with Linear Operators\\}
	\label{thm:orthoComp}
	If $\Omega$ is orthomonotone with respect to a closed, super additive subspace valued map $S$ and $L:\mathcal{H}\to \mathcal{Z}$ preserves orthogonality with respect to $S_L$, then $\Omega \circ L(f):= \Omega(Lf)$, is orthomonotone with respect to $S_L=L^*\circ S \circ L$, i.e, $\forall A\in \mathbb{F}_\mathcal{H}, f\in S_L(A), g\in S_L(A)^\perp$, $\Omega(Lf + Lg)\geq \max\{\Omega(Lf),\Omega(Lg)\}$.
	\label{thm:OperatorComposition}
	\begin{proof}
		Since $L$ preserves orthogonality with respect to $S_L$, we know $LS_L(A)\subseteq S(LA)$ and $L(S_L(A)^\perp)\subseteq S(LA)^\perp$. Thus $\forall f\in S_L(A), g\in S_L(A)^\perp$, $Lf\in S(LA), Lg\in S(LA)^\perp$. Then by orthomonotone property of $\Omega$ with respect to $S$, we must have $\Omega(Lf+Lg)\geq \max\{\Omega(Lf),\Omega(Lg)\}$ $\implies$ $\Omega\circ L$ is orthomonotone w.r.t. $S_L$.
	\end{proof}
\end{theorem}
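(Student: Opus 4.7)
The plan is to unfold the definitions and let the ``preserves orthogonality'' hypothesis transport the relevant membership facts from $\mathcal{H}$ to $\mathcal{Z}$, where orthomonotonicity of $\Omega$ can be applied directly. The pieces are already assembled in Definition \ref{def:orhtpresL} and the definition of an orthomonotone functional, so no extra machinery is needed.

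Concretely, I would fix an arbitrary $A \in \mathbb{F}_\mathcal{H}$, $f \in S_L(A)$ and $g \in S_L(A)^\perp$. The first key step is to invoke the orthogonality-preserving property of $L$: condition (1) of Definition \ref{def:orhtpresL} gives $Lf \in L S_L(A) \subseteq S(LA)$, and condition (2) gives $Lg \in L(S_L(A)^\perp) \subseteq S(LA)^\perp$. In other words, under the image of $L$, the decomposition $f, g$ in $\mathcal{H}$ becomes a decomposition $Lf, Lg$ living in $S(LA)$ and its orthogonal complement in $\mathcal{Z}$.

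The second step is to apply the orthomonotonicity of $\Omega$ with respect to $S$, taking the set $LA \in \mathbb{F}_\mathcal{Z}$ as the test set. Since $Lf \in S(LA)$ and $Lg \in S(LA)^\perp$, the defining inequality yields $\Omega(Lf + Lg) \geq \max\{\Omega(Lf), \Omega(Lg)\}$. By linearity of $L$, $Lf + Lg = L(f+g)$, so this reads $(\Omega \circ L)(f+g) \geq \max\{(\Omega \circ L)(f), (\Omega \circ L)(g)\}$, which is exactly orthomonotonicity of $\Omega \circ L$ with respect to $S_L$.

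There is essentially no obstacle here; the whole content is bookkeeping that routes $f, g$ through $L$ and lands them in the right subspaces of $\mathcal{Z}$. The only subtle point worth a sentence of care is that one must verify the two inclusions from Definition \ref{def:orhtpresL} are used in the correct direction: we need $Lf$ to actually lie in $S(LA)$ (not merely that $S(LA)$ contains $L$-images of some superset), and likewise for the orthogonal complement, but both are exactly what the definition provides. Closed/super additive hypotheses on $S$ (and hence $S_L$, via Proposition \ref{prop:pullbackS}) are not directly invoked in the inequality chain, but they are what makes $S_L(A)^\perp$ a sensible orthogonal complement in the statement, so they are implicit in the setup rather than the argument.
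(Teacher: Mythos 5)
Your argument is correct and follows exactly the same route as the paper's proof: use the two inclusions from the orthogonality-preserving hypothesis to place $Lf$ in $S(LA)$ and $Lg$ in $S(LA)^\perp$, then apply the orthomonotonicity of $\Omega$ with respect to $S$ on the set $LA$. Nothing is missing.
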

With the notions of Linear and Adjoint operators combined with Subspace Valued maps and Orthomonotone functionals, we are now ready to present the main result for the Generalized Representer Theorem.

\section{Generalized Representer Theorem}
\label{sec:GRT}
	Let $\mathcal{H}$ be an arbitrary Hilbert space.  
	For any $m\in\mathbb{N}$ and $i\in \{1,\dots,m+1\}$, let $L_i:\mathcal{H}\to\mathcal{Z}_i$ be continuous linear operators from $\mathcal{H}$ to arbitrary Hilbert spaces $\mathcal{Z}_i$. Let the Hilbert space obtained from $\mathcal{Z}_1\times\mathcal{Z}_2\times\dots\mathcal{Z}_{m}$ be denoted $\mathcal{Z}$ and let $\mathbb{F}_\mathcal{H},\mathbb{F}_{\mathcal{Z}_i},\mathbb{F}_{\mathcal{Z}}$ be the power set of $\mathcal{H}$, $\mathcal{Z}_i$ and $\mathcal{Z}$ respectively. Let $C:\mathcal{Z}\to \mathbb{R}\cup\{+\infty\}$ and $\Omega:\mathcal{Z}_{m+1}\to \mathbb{R}\cup\{+\infty\}$ be some lower semi-continuous functionals.
	
	Consider the functional $J:\mathcal{H}\to\mathbb{R}\cup\{+\infty\}$,
	\begin{equation}
	J(f):= C(L_1f,\dots,L_mf)+\Omega(L_{m+1} f) \label{eq:Reg3}
	\end{equation}
	Given a functional $J$ specified by $(C,\Omega,L_1,\dots,L_{m+1})$, a learning problem is then posed as $$f_{opt}=\underset{f\in\mathcal{H}}{\text{argmin}} \quad J(f)$$
	 The inclusion of $\{+\infty\}$ in the range of lower semi-continuous $C$ and $\Omega$ allows one to consider constrained optimization problems. Following are a few examples of learning problems written in this form,
	 \begin{example}{Learning problems\\}
	 	\begin{enumerate}
	 		\item Let $\mathcal{H}$ be an RKHS space of functions taking values in $\mathcal{Z}=\mathbb{R}^n$. Consider the evaluation operator from Example \ref{ex:EvalOp} such that $L_x:\mathcal{H}\to\mathcal{Z}$ is given by $L_xf:=f(x)$. Let $\{(x_i,y_i):i=1,\dots,m\}$ be a training data set. Let $L_1,\dots,L_m$ be given by $L_{x_1},\dots,L_{x_m}$ and 
	 		$L_{m+1}:\mathcal{H}\to\mathcal{H}$ be the identity operator. Let $C(L_1f,\dots,L_mf) := \sum_{i=1}^{m}|| y_i - \sigma(L_{x_i}f) ||_\mathcal{Z}^2$ for some activation function $\sigma:\mathbb{R}^n\to\mathbb{R}^n$. Let $\Omega(L_{m+1}f):=||f||_{\mathcal{H}}^2$.
	 		Then for $J(f)=\sum_{i=1}^{m}|| y_i - \sigma(L_{x_i}f) ||_\mathcal{Z}^2+||f||_{\mathcal{H}}^2$ we get a regularized least squares problem in the RKHS space if $\sigma$ is linear and an RKHS based neural network layer for some nonlinear $\sigma$.
	 		\item Let $\Omega(f)=||f||_1^2$ in the above example and we get a $\ell_1$ regularized problem.
	 		\item Let $\mathcal{Z}=\mathbb{R}$, $y_i\in\{+1,-1\}$, $C(L_1f,\dots,L_mf):= \begin{cases}
			0	& 		\forall i\in\{1,\dots,m\}; \quad y_i L_if >0\\
			+\infty & \text{otherwise}
	 		\end{cases}$ and $\Omega(f)=||f||^2$. Then $J(f)=C(L_1f,\dots,L_mf)+\Omega(f)$ gives a Support Vector Machine for binary classification.
	 	\end{enumerate}
	 \end{example}
 Given a learning problem in terms of a functional $J$, we can next define the notion of a linearly representable problem.
 \begin{definition}{Linearly Representable Problem\\}
 	Consider the functional $J:\mathcal{H}\to\mathbb{R}\cup\{+\infty\}$ from \eqref{eq:Reg3}.
 	Let $S:\mathbb{F}_{\mathcal{Z}_{m+1}}\to\mathbb{F}_{\mathcal{Z}_{m+1}}$ be a subspace valued map. Let $S_{L_{m+1}}:\mathbb{F}_{\mathcal{H}}\to\mathbb{F}_{\mathcal{H}}$ be $S_{L_{m+1}} := L_{m+1}^*\circ S \circ L_{m+1}$ as given by Propostion \ref{prop:pullbackS}. Let $\mathcal{N}_{L_1,\dots,L_m}^\perp$ be the orthogonal complement to the joint null space for the operators $L_1,\dots,L_m$ as given by corollary \ref{cor:nullspaceorth} and let $L_{m+1}$ preserve orthogonality w.r.t. $S_{L_{m+1}}$ (as defined in Definition \ref{def:orhtpresL}).
 	
 	The functional $J$ is said to be Linearly Representable with respect to $S$ if a minimizer for $\underset{f\in\mathcal{H}}{\text{min}} \quad J(f)$ exists in $S_{L_{m+1}}(\mathcal{N}_{L_1,\dots,L_m}^\perp)$.
 	
 	 Further a family of functionals $\mathcal{F}$ is said to be \textbf{Linearly Representable with respect to $S$} if every $J\in\mathcal{F}$ is Linearly representable with respect to $S$.
 \end{definition}

The notion of linear representability is quite significant as it allows one to write the minimizer in a possibly infinite dimensional space $\mathcal{H}$ in terms of finitely many vectors spanning $\mathcal{N}_{L_1,\dots,L_m}^\perp$. This often allows one to reformulate infinite dimensional optimization problems in $\mathcal{H}$ into equivalent finite dimensional optimization in $\mathcal{Z}$.

The Generalized Representer Theorem provides necessary and sufficient conditions for a family of functionals $\mathcal{F}$ to be Linearly Representable.
Below we state and prove, first the sufficient condition for Linear Representability of a functional $J$ and then the complete statement of necessary and sufficient condition for a family of functionals $\mathcal{F}$.

\begin{theorem}{Generalized Representer Theorem (Sufficient condition)\\}
	\label{lem:Sufficiency}
	Let $S$ be an inclusive, closed, super additive subspace valued map. For any $J$ of the form \eqref{eq:Reg3} with $\Omega$ and $C$ lower semi-continuous, the functional $J$ is Linear representable with respect to $S$,  if $\Omega$ is orthomonotone with respect to $S$.
	\begin{proof}
		Let $A = \mathcal{N}_{L_1\dots L_m}^\perp$. If $\Omega$ is orthomonotone w.r.t. $S$ then $\Omega\circ L_{m+1}$ is orthomonotone w.r.t. $S_{L_{m+1}}$ (by Theorem \ref{thm:OperatorComposition}). Thus $\forall f\in S_{L_{m+1}}(A), g\in S_{L_{m+1}}(A)^\perp$, $\Omega(L_{m+1}f+L_{m+1}g)\geq \Omega(L_{m+1}f)$.
		Also, if $S$ is inclusive, closed and super additive, so is $S_{L_{m+1}}$. And thus by Lemma \ref{lem:closedQuasInc}, $S_{L_{m+1}}$ is null space preserving with respect to $\{L_1,\dots,L_m\}$, i.e., $S_{L_{m+1}}(A)^\perp \subseteq \mathcal{N}_{L_1,\dots,L_m}$. Thus for all $g\in S_{L_{m+1}}(A)^\perp$, $L_ig=0$ for all $i\in\{1,\dots,m\}$.
		
		Now, note that $S_{L_{m+1}}(A)$ and $S_{L_{m+1}}(A)^\perp$ forms an orthogonal complementary pair for $\mathcal{H}$, thus for any $F \in \mathcal{H}$ we can find a decomposition for $F = f+g$, $f\in S_{L_{m+1}}(A)$, $g\in S_{L_{m+1}}(A)^\perp$. 
		Then
		\begin{eqnarray}
			J(F) &=& C(L_1(f+g),\cdots,L_m(f+g)) + \Omega(L_{m+1}(f+g)) \\
			&=& C(L_1f,\cdots,L_mf)+\Omega(L_{m+1}f+L_{m+1}g) \\
			&\geq& C(L_1f,\cdots,L_mf)+\Omega(L_{m+1}f)
		\end{eqnarray} 
		Thus $\forall F\in\mathcal{H}$, $\exists f\in S_{L_{m+1}}(A)$ such that $J(f)\leq J(F)$. Thus if $J$ admits a minimizer in $\mathcal{H}$, a minimizer must exists in $S_{L_{m+1}}(A)$, implying $J$ is Linearly Representable w.r.t. $S$.
	\end{proof}
\end{theorem}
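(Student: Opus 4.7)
The plan is to leverage the orthogonal decomposition $\mathcal{H} = S_{L_{m+1}}(A) \oplus S_{L_{m+1}}(A)^\perp$, where $A := \mathcal{N}_{L_1,\dots,L_m}^\perp$, and show that projecting any candidate $F \in \mathcal{H}$ onto $S_{L_{m+1}}(A)$ cannot increase the value of $J$. Once this is in place, any minimizer (assumed to exist in $\mathcal{H}$ by lower semi-continuity of $C$ and $\Omega$) can be replaced by one living in $S_{L_{m+1}}(A)$, which is precisely linear representability.

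The first step is to promote the hypothesized properties of $S$ into matching properties of the pulled-back map $S_{L_{m+1}} := L_{m+1}^* \circ S \circ L_{m+1}$. Proposition \ref{prop:pullbackS} immediately gives closedness and super additivity of $S_{L_{m+1}}$. Inclusivity comes from the Linearly Representable setup, since Definition \ref{def:orhtpresL} presumes $S_{L_{m+1}}$ inclusive when stating that $L_{m+1}$ preserves orthogonality. With all three properties in hand, the lemma preceding Definition 11 guarantees that $S_{L_{m+1}}(A)$ is a closed vector subspace, so the orthogonal decomposition is well-defined, and Lemma \ref{lem:closedQuasInc} provides the crucial null-space preservation $S_{L_{m+1}}(A)^\perp \subseteq \mathcal{N}_{L_1,\dots,L_m}$.

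With the decomposition $F = f + g$ in hand, $f \in S_{L_{m+1}}(A)$ and $g \in S_{L_{m+1}}(A)^\perp$, I would treat the two summands of $J$ separately. For the empirical term, the null-space preservation yields $L_i g = 0$ for $i = 1,\dots,m$, so $C(L_1 F, \dots, L_m F) = C(L_1 f, \dots, L_m f)$. For the regularizer, Theorem \ref{thm:orthoComp} combined with the orthogonality-preserving hypothesis on $L_{m+1}$ makes $\Omega \circ L_{m+1}$ orthomonotone with respect to $S_{L_{m+1}}$, so $\Omega(L_{m+1}f + L_{m+1}g) \geq \Omega(L_{m+1}f)$. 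Adding these two inequalities yields $J(F) \geq J(f)$, which is the desired conclusion.

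The main obstacle, as I see it, is the bookkeeping around which structural properties are actually inherited by $S_{L_{m+1}}$. Proposition \ref{prop:pullbackS} explicitly warns that inclusivity is not automatic under pullback, so one must lean on the orthogonality-preservation assumption to supply it. A parallel subtlety is that orthomonotonicity of $\Omega$ on $\mathcal{Z}_{m+1}$ does not directly control behaviour on $\mathcal{H}$; this is exactly what Theorem \ref{thm:orthoComp} handles, but invoking it requires first checking that $L_{m+1}$ sends the two orthogonal pieces of the $\mathcal{H}$-decomposition into the corresponding orthogonal pieces in $\mathcal{Z}_{m+1}$. Once these inheritance items are verified, the estimate $J(F) \geq J(f)$ is a two-line consequence and the theorem follows.
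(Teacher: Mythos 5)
Your proposal is correct and follows essentially the same route as the paper's proof: orthogonal decomposition $F = f+g$ with respect to $S_{L_{m+1}}(A)$ and $S_{L_{m+1}}(A)^\perp$, null-space preservation via Lemma \ref{lem:closedQuasInc} to fix the $C$ term, and Theorem \ref{thm:orthoComp} to push orthomonotonicity through $L_{m+1}$ for the $\Omega$ term. If anything you are slightly more careful than the paper, which asserts that $S_{L_{m+1}}$ inherits inclusivity despite Proposition \ref{prop:pullbackS} saying otherwise, whereas you correctly source it from the standing hypotheses of the linear-representability setup; the only small blemish is the parenthetical claim that lower semi-continuity alone guarantees a minimizer exists, which is unnecessary since the conclusion is conditional on existence anyway.
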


The Generalized Representer Theorem we present here differs from its prior counterpart \cite[Theorem 3.1]{Argyriou:2014:UVR:3044805.3044976} in two significant ways. Firstly, there is no assumption for a finite dimensional $r$-regularity property on the subspace valued map and secondly, the output space $\mathcal{Z}$ can be arbitrary infinite dimensional Hilbert spaces. These two changes become significant since when dealing with stochastic regression problems the output space  $\mathcal{Z}$ is an infinite dimensional semi-Hilbert space of random variables and when dealing with $\ell_1$ regularization problems in function spaces, the corresponding subspace valued map $S_{proj}$ is not $r$-regular for any finite $r$. We will expand upon these differences in Section \ref{sec:examples} with corresponding application examples.

To prove the necessary part of the theorem, first consider the following proposition.
\begin{proposition}
	\label{prop:familyL}
	Let $z^\star= (z^\star_1,\dots,z^\star_m)$ be a minimizer for $C$. 
	Let $A=\mathcal{N}_{L_1\dots L_m}^\perp$ and $f\in S_{L_{m+1}}(A)$, $f\neq 0$.
	Then for a collection of $m$ linear operators $L'_i:\mathcal{H}\to\mathcal{Z}_i$ such that $L'_i h=z^\star_i \langle f,h \rangle_\mathcal{H}/||f||^2$ for $i = 1,\dots,m$. 
	\begin{enumerate}
		\item $g\in S_{L_{m+1}}(A)^\perp$ $\implies$ $L'_ig = 0$ 
		\item $L'_if = z_i^\star$ and the adjoint is given by ${L'_i}^*z = \langle z_i^\star, z\rangle_{\mathcal{Z}_i} f$
	\end{enumerate}
\begin{proof}
	Note that $g\in S_{L_{m+1}}(A)^\perp$, $f\in S_{L_{m+1}}(A)$ implies $g\perp f$ and $L'_ig = (z_i^\star/||f||^2) \langle f,g\rangle_{\mathcal{H}} = 0$. Thus showing the first statement of the proposition. 
	$L'_if=z_i^\star$ follows by substituting $f$ into the definition for $L'_if$. For the adjoint, note that ${L'_i}^*$ is such that $\langle {L'_i}^*z,h \rangle_\mathcal{H} = \langle z, L'_ih \rangle_{\mathcal{Z}_i} = \langle z, z_i^\star \langle f,h \rangle_{\mathcal{H}}/||f||^2 \rangle_{\mathcal{Z}_i} = \langle z, z_i^\star \rangle_{\mathcal{Z}_i} \langle f,h \rangle_{\mathcal{H}}/||f||^2$. Thus we can conclude ${L'_i}^*z = \langle z_i^\star, z\rangle_{\mathcal{Z}_i} f/||f||^2$. 
\end{proof}
\end{proposition}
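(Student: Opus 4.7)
The plan is to verify both statements by direct inner-product computation from the explicit formula $L'_i h = z_i^\star \langle f, h\rangle_{\mathcal{H}}/\|f\|^2$. Each $L'_i$ is a rank-one operator and is visibly bounded and linear, so no additional operator-theoretic work is needed; the hypothesis $f \neq 0$ guarantees the normalization by $\|f\|^2$ is well-defined. Crucially, the only structural input from $S_{L_{m+1}}$ is the orthogonality of $S_{L_{m+1}}(A)$ and $S_{L_{m+1}}(A)^\perp$ as complementary subspaces of $\mathcal{H}$.

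For the first statement, I would observe that $f \in S_{L_{m+1}}(A)$ and $g \in S_{L_{m+1}}(A)^\perp$ force $\langle f, g\rangle_{\mathcal{H}} = 0$ directly from the definition of the orthogonal complement; substituting $h = g$ into the formula for $L'_i$ then yields $L'_i g = z_i^\star \cdot 0 / \|f\|^2 = 0$ in $\mathcal{Z}_i$. For the evaluation $L'_i f = z_i^\star$ in the second statement, I would simply plug $h = f$ into the definition and use $\langle f, f\rangle_{\mathcal{H}} = \|f\|^2$ to cancel the normalization.

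The adjoint computation is the only step that requires a small amount of bookkeeping: starting from the defining identity $\langle {L'_i}^* z, h\rangle_{\mathcal{H}} = \langle z, L'_i h\rangle_{\mathcal{Z}_i}$, I would expand the right-hand side as $\langle z, z_i^\star\rangle_{\mathcal{Z}_i} \cdot \langle f, h\rangle_{\mathcal{H}}/\|f\|^2$, pulling the real scalar $\langle f, h\rangle_{\mathcal{H}}/\|f\|^2$ out of the $\mathcal{Z}_i$-inner product, and then rewriting the resulting expression as $\langle (\langle z_i^\star, z\rangle_{\mathcal{Z}_i}/\|f\|^2)\, f,\, h\rangle_{\mathcal{H}}$ by linearity of the $\mathcal{H}$-inner product in its first slot. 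Uniqueness of the adjoint (cited earlier from Conway) then forces ${L'_i}^* z = \langle z_i^\star, z\rangle_{\mathcal{Z}_i}\, f/\|f\|^2$, matching the normalization that appears in the paper's own proof. There is no substantive obstacle here; the argument is entirely routine, and the only point worth flagging is that the $1/\|f\|^2$ factor in the adjoint seems to be dropped in the statement but is recovered in the computation.
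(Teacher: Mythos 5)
Your proposal is correct and follows essentially the same route as the paper's own proof: orthogonality of $f$ and $g$ gives the first claim, direct substitution gives $L'_if=z_i^\star$, and the adjoint is read off from the defining identity together with uniqueness. Your observation that the $1/\|f\|^2$ factor is missing from the adjoint formula in the proposition's statement but appears in the computation is accurate --- the paper's own proof also concludes with ${L'_i}^*z = \langle z_i^\star, z\rangle_{\mathcal{Z}_i}\,f/\|f\|^2$, so the statement as printed contains a typo.
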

The above proposition shows the existence of a nonempty subspace of linear operators $(L_1',\dots,L_m')$ such that a fixed $(S_{L_{m+1}}(\mathcal{N}_{L_1,\dots,L_m}^\perp))^\perp\subseteq \mathcal{N}_{L_1',\dots,L_m'}$ and $S$ is null space preserving with respect to $\{L_1',\dots,L_m'
\}$ (by Lemma \ref{lem:closedQuasInc}).
\begin{definition}{(Null space preserving operators)\\}
	Let $S$ be a closed, super additive subspace valued map. Then the space of continuous linear operators $\mathcal{L}=\{(L_1,\dots,L_m):  S \text{ is null space preserving w.r.t. } \{L_1,\dots,L_m\} \}$ is called a family of null space preserving operators with respect to $S$.
\end{definition}
Note that the operators $(L_1',\dots,L_m')$ from Proposition \ref{prop:familyL} belong to $\mathcal{L}$.
\begin{definition}{(Family of null space preserving functionals)\\}
	\label{def:nullspacepresFamily}
	Let $S$ be an inclusive, closed, super additive subspace valued map. Let $C,\Omega$ be lower semicontinuous functionals and $L_{m+1}:\mathcal{H}\to\mathcal{Z}_{m+1}$ and $\mathcal{L}$ be a family of null space preserving operators w.r.t. $S_{L_{m+1}}$. Then consider the family of functionals
	$
		\mathcal{F}:=\{J:\mathcal{H}\to\mathbb{R}\cup\{+\infty\}: J(f)=C(L_1f,\dots,L_mf)+\gamma\Omega(L_{m+1}f), \gamma\in[0,\infty), (L_1,\dots,L_m)\in\mathcal{L} \}
	$. We will call this a family of null space preserving functionals.
\end{definition}
\begin{theorem}{Generalized Representer Theorem (Necessary and Sufficient Conditions) \\}
	\label{thm:NecSuff}
	A family of null space preserving functionals $\mathcal{F}$ is Linearly Representable \textbf{if and only if}, $\Omega\circ L_{m+1}$ is orthomonotone with respect to $S_{L_{m+1}}$ \label{cnd:OmegaCond}
	\begin{proof}
		The proof for sufficiency (i.e. orthomonotone $\Omega$ $\implies$ existence of representer theorem) follows from Theorem \ref{lem:Sufficiency}.
		
		To prove necessity of orthomonotone $\Omega$, let the family of functionals $\mathcal{F}$ be linear representable w.r.t. to map $S$.
		
		 Consider a functional $J:=(C,\Omega,L_1,\dots,L_m,L_{m+1})\in\mathcal{F}$ and construct a functional  $J':=(C,\Omega,L'_1,\dots,L'_m,L_{m+1})\in\mathcal{F}$ with $L'_1,\dots,L'_m$ as given in Proposition \ref{prop:familyL}. Let $f_{J'}$ be the minimizer for $J'$. 
		
		Further we know $C(z_1^\star,\dots,z_m^\star)+\Omega(L_{m+1}f_{J'})\leq C({L'_1}f_{J'},\dots,{L'_m}f_{J'})+\Omega(L_{m+1}f_{J'})\leq C({L'_1}(f+g),\dots,{L'_m}(f+g) + \Omega(L_{m+1}(f+g))$ for any $f,g\in\mathcal{H}$. For $A = \mathcal{N}_{L_1\dots L_m}^\perp$,  consider $f\in S_{L_{m+1}}(A)$, $g\in S_{L_{m+1}}(A)^\perp$, then $L'_ig=0$ and $L'_if=z_i^\star$. Thus we have 
		$C(z_1^\star,\dots,z_m^\star)+\Omega(L_{m+1}f_{J'})\leq C({L'_1}f_{J'},\dots,{L'_m}f_{J'})+\Omega(L_{m+1}f_{J'})\leq C(z_1^\star,\dots,z_m^\star) + \Omega(L_{m+1}(f+g))$ $\implies$ $\Omega(L_{m+1}f_{J'})\leq \Omega(L_{m+1}(f+g))$.

		Now if we consider a Cauchy sequence $\gamma_k \to 0$ converging to $0$ and a sequence of functionals $J'_k:=(C,\gamma_k \Omega,L'_1,\dots,L'_m,L_{m+1})\in\mathcal{F}$. Then we get a sequence of minimizers $f_{J'_k}\to f$. Also since $\Omega(L_{m+1}f_{J'_k})\leq \Omega(L_{m+1}(f+g))$ for all $f_{J'_k}$, this implies $\Omega(L_{m+1}f)\leq \Omega(L_{m+1}(f+g))$ for all $f\in S_{L_{m+1}}(A),g\in S_{L_{m+1}}(A)^\perp$. 
		
		Similarly for $w\in S_{L_{m+1}}(A)^\perp$, $w\neq 0$ consider, operators $L_i''h=z_i^\star\langle w,h\rangle_{\mathcal{H}}/||w||_{\mathcal{H}}^2$. Note that for all $f\in S_{L_{m+1}}(A)$, $L_i''f=0$ and for all $g\in S_{L_{m+1}}(A)^\perp$, $L_i''g=z^\star$. Also $(L_1'',\dots,L_m'')\in \mathcal{L}$ by Lemma \ref{lem:closedQuasInc}. Consider the functional $J''=(C,\Omega,L_1'',\dots,L_m'',L_{m+1})\in\mathcal{F}$ and let the minimizer for $J''$ be $f_{J''}$. Then as before $C(z_1^\star,\dots,z_m^\star)+\Omega(L_{m+1}f_{J''})\leq C({L''_1}f_{J''},\dots,{L''_m}f_{J''})+\Omega(L_{m+1}f_{J''})\leq C({L''_1}(f+g),\dots,{L''_m}(f+g) + \Omega(L_{m+1}(f+g))$. Thus $\Omega(L_{m+1}f_{J''})\leq\Omega(L_{m+1}(f+g))$. Considering then the sequence of functionals $J''_k:=(C,\gamma_k \Omega,L''_1,\dots,L''_m,L_{m+1})\in\mathcal{F}$, the corresponding minimizers $f_{J''_k}\to g$ as $\gamma_k\to 0$. Thus for all $g\in S_{L_{m+1}}(A)^\perp$, $\Omega(L_{m+1}g)\leq \Omega(L_{m+1}(f+g))$.
		
		Thus the existence of linear representers for the family $\mathcal{F}$ implies for all $f\in S_{L_{m+1}}(A)$, $g\in S_{L_{m+1}}(A)^\perp$, $ \Omega(L_{m+1}(f+g))\geq \max\{\Omega(L_{m+1}f),\Omega(L_{m+1}g)\}$, i.e. $\Omega\circ L_{m+1}$ is orthomonotone w.r.t. $S_{L_{m+1}}$.
	\end{proof}
\end{theorem}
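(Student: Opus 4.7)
The plan is to handle the two directions asymmetrically: sufficiency is already delivered by Theorem \ref{lem:Sufficiency}, so only necessity requires real work. Assume therefore that the family $\mathcal{F}$ is linearly representable with respect to $S$, set $A := \mathcal{N}_{L_1,\dots,L_m}^\perp$ (understood to vary with the functional under consideration), and aim to derive, for every $f \in S_{L_{m+1}}(A)$ and $g \in S_{L_{m+1}}(A)^\perp$, the inequality $\Omega(L_{m+1}(f+g)) \geq \max\{\Omega(L_{m+1}f), \Omega(L_{m+1}g)\}$ that expresses orthomonotonicity of $\Omega \circ L_{m+1}$ with respect to $S_{L_{m+1}}$.

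The central device is Proposition \ref{prop:familyL}, which grants the freedom to engineer operators that place a chosen $f$ at a minimum of the cost. Fix a minimizer $z^\star = (z_1^\star,\dots,z_m^\star)$ of $C$; if no such minimizer exists the desired inequality is vacuous. For a nonzero $f \in S_{L_{m+1}}(A)$ I would set $L_i' h := z_i^\star \langle f,h\rangle_{\mathcal{H}}/\|f\|^2$. By the proposition each $L_i'$ vanishes on $S_{L_{m+1}}(A)^\perp$, satisfies $L_i' f = z_i^\star$, and the tuple $(L_1',\dots,L_m')$ lies in $\mathcal{L}$, so the functional $J' := (C, \Omega, L_1',\dots,L_m', L_{m+1})$ belongs to $\mathcal{F}$. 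Linear representability furnishes a minimizer $f_{J'} \in S_{L_{m+1}}(A)$, and testing it against the competitor $f+g$ gives
\begin{equation*}
C(z^\star) + \Omega(L_{m+1} f_{J'}) \leq J'(f_{J'}) \leq J'(f+g) = C(z^\star) + \Omega(L_{m+1}(f+g)),
\end{equation*}
which collapses to $\Omega(L_{m+1} f_{J'}) \leq \Omega(L_{m+1}(f+g))$.

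To convert the inequality at $f_{J'}$ into one at $f$ itself, I would rescale the regularizer: introduce a sequence $\gamma_k \downarrow 0$ and the family members $J_k' := (C, \gamma_k \Omega, L_1',\dots,L_m', L_{m+1}) \in \mathcal{F}$. As $\gamma_k \to 0$ the cost term dominates and the representer $f_{J_k'} \in S_{L_{m+1}}(A)$ is forced to make $L_i' f_{J_k'}$ approach $z_i^\star$; because each $L_i'$ is rank-one along the line spanned by $f$, this pins the $f$-component of $f_{J_k'}$ down and drives $f_{J_k'} \to f$ in norm. Lower semicontinuity of $\Omega$ then lifts the bound to $\Omega(L_{m+1} f) \leq \Omega(L_{m+1}(f+g))$. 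The mirror construction with $L_i'' h := z_i^\star \langle g, h\rangle_{\mathcal{H}} / \|g\|^2$ swaps the roles of $f$ and $g$ (these operators vanish on $S_{L_{m+1}}(A)$ and evaluate to $z_i^\star$ at $g$), and an identical rescaling argument yields $\Omega(L_{m+1} g) \leq \Omega(L_{m+1}(f+g))$. Taking the maximum completes orthomonotonicity.

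The principal obstacle is the limit $\gamma_k \to 0$: one must ensure the minimizers $f_{J_k'}$ exist in $S_{L_{m+1}}(A)$ (supplied by the representability hypothesis), converge to $f$ rather than wandering within the level set of $L_1',\dots,L_m'$, and that lower semicontinuity survives the limit on the $\Omega \circ L_{m+1}$ side. The rank-one structure of $L_i'$ effectively locks the $f$-component of the sequence, but one still has to exclude a stray orthogonal drift inside $S_{L_{m+1}}(A)$, which can be handled either by refining the minimization to the line $\mathbb{R} f$ via a coercivity argument or by exploiting that the extra degrees of freedom increase $\Omega$ strictly. Once this convergence step is pinned down, the remainder is book-keeping and the symmetric inequality from the $L_i''$ construction closes the argument.
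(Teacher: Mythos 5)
Your proposal follows essentially the same route as the paper: sufficiency via Theorem \ref{lem:Sufficiency}, then necessity by constructing the rank-one operators $L_i'$ (and their mirror $L_i''$) from Proposition \ref{prop:familyL}, comparing the representer against the competitor $f+g$, and sending the regularization weight $\gamma_k\to 0$ to transfer the bound from $f_{J_k'}$ to $f$ (respectively $g$). The convergence $f_{J_k'}\to f$ that you correctly flag as the principal obstacle is asserted without justification in the paper's proof as well, so your treatment is, if anything, more candid about that step.
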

\begin{remark}{(Extension and Previous Works)\\}
	We presented here a generalized version of Representer theorems for Hilbert space valued functions with general loss functions on an arbitrary target Hilbert space $\mathcal{Z}$ without the assumption of ``r-regularity" to allow for more general regularization like the $\ell_1$ norm.
	
	Special cases of the theorem addressing least squares regularization for vector valued functions in Reproducing Kernel Hilbert Space (RKHS) framework can be found in \cite[Theorems 3.1, 4.1]{Micchelli:2005:LVF:1119610.1119620}. Special cases of the theorem for $\ell_1$ regularization can be found in \cite{unser2016representer}. A generalized version of the Representer theorems for more general loss functions but still restricted to RKHS of real valued functions can be found in \cite{DinuzzoS2012,Scholkopf:2001:GRT:648300.755324}. The far more general framework of subspace valued maps was introduced in \cite[Theorem 3.1]{Argyriou:2014:UVR:3044805.3044976}. 
	
	\cite{Argyriou:2014:UVR:3044805.3044976} however restricts its loss function to the form $J(f):= C(\langle f,w_1\rangle,\dots, \langle f,w_m \rangle) + \Omega(f) $ where $C$ necessarily takes arguments from $\mathbb{R}^m$. We extend this result to allow arguments for $C$ and $\Omega$ in an arbitrary Hilbert space $\mathcal{Z}$. Considering an arbitrary Hilbert space $\mathcal{Z}$ for the output also has the effect that representer theorems for vector valued outputs can be simply explained away with the $S_{\mathbb{R}}$ subspace valued map as opposed to using a matrix based $S_{\mathcal{L}}$ subspace valued maps as was required by \cite{Argyriou:2014:UVR:3044805.3044976} ($S_{\mathbb{R}},S_{\mathcal{L}}$ as defined in Example \ref{ex:subvalmaps}). Further with infinite dimensional outputs $\mathcal{Z}$, cases of such outputs occurring in Bayesian regression settings (special case in \cite{pillai2007characterizing}) can also be tackled, which were outside the scope of previous works.
\end{remark}

\section{Application Examples}
\label{sec:examples}
\subsection{Deep Neural Networks}
\begin{figure}
	\centering
	\centering
	\includegraphics[width=0.8\textwidth,trim={2cm 1.8cm 2cm 1.8cm},clip]{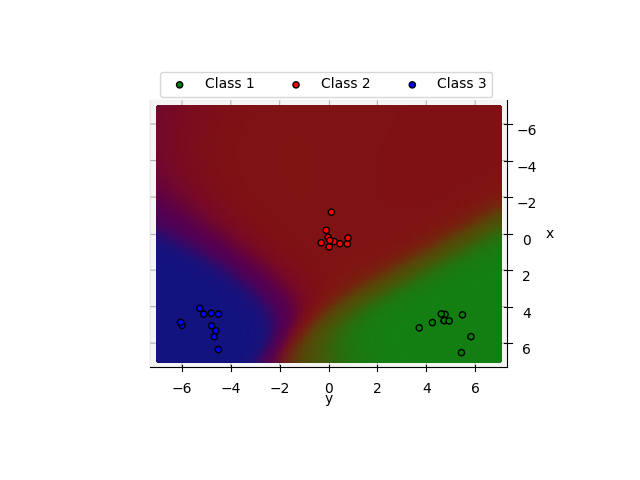}
	\caption{Multi class classification with a 3 layer squared exponential kernel based neural network. Class probabilities shaded as red, blue, green values. \\
		Training data shown as point clusters.}
	\label{fig:nn}
\end{figure}
Consider a single layer perceptron with an activation function $\sigma$, with input $x$, hidden variables $h = f(x)$ and output $y$. Given $m$ training samples $\{(x_i,y_i):i\in\mathbb{N}_m\}$ consider the variational learning problem
\begin{equation}
\underset{f\in\mathcal{H}}{\text{min}} \quad \sum_{i=1}^{m} || y_i - \sigma(L_{x_i}f) ||_\mathcal{Z}^2 + \lambda || f ||_\mathcal{H}^2
\end{equation}
This minimization problem fits exactly the form of \eqref{eq:Reg3} by taking $C$ to be $\sum_{i=1}^{m}||y_i - \sigma(L_{x_i}(\cdot)) ||^2$ and $\Omega$ to be $|| f ||_\mathcal{H}^2$. Since $\Omega$ is orthomonotone with respect to $S_{\mathbb{R}}$, we know a minimizer of the form $\sum_{i=1}^{m} L_{x_i}^* z_i$ must exist. Substituting this form into the minimization above we can get a finite dimensional minimization problem. Also for $\mathcal{H}$ restricted to an RKHS we know the adjoint $L_{x}^*$ to be the kernel section from Example \ref{ex:EvalOp}. For formulations with an explicit basis we know $L_x^*$ from Example \ref{ex:finiteexpbasis}. Thus we have a nonlinear program to solve for a kernel based and explicit basis based deep neural network with $z_i\in\mathcal{Z}$ being the new decision variables. Note that the program becomes nonlinear due to a nonlinear activation function $\sigma$ and only thus differs from a generalized least squares setting.

Now for a N-layer perceptron, consider each layer perceptron to be given by $f^{(l)}\in\mathcal{H}^{(l)}$, $C_l$, $\Omega_l$, $\sigma_l$ and output $y^{(l)}$, $l\in \{1,\dots,N\}$. Also lets denote the inputs $x_i$ as $y_i^{(0)}$ and observed output $y_i$ as $y_i^{(N)}$ for notational convenience.
Then consider the minimization problem
\begin{equation}
\label{eq:nnopt}
\underset{\{y^{(l)}:l=1,\dots,N-1\},\{f^{(l)}\in\mathcal{H}^{(l)}:l=1,\dots,N\}}{\text{min}} \quad \sum_{l=0}^{N-1}\left[\sum_{i=1}^{m} || y_i^{(l+1)} - \sigma_l(L_{y_i^{(l)}}f^{(l)}) ||_{\mathcal{Z}^{(l)}}^2 + \lambda_l || f ||_{\mathcal{H}^{(l)}}^2\right]
\end{equation}

One can notice here the similarity of the above problem to the discrete time multiple shooting problems in numerical optimal control where optimal decisions are to be made over a N step horizon and $y_i^{(l)}$s are the predicted states of the system to be solved for. The key idea in multiple shooting methods is to find the optimal solution for each segment $(l)\to(l+1)$ assuming a fixed $\bar{y}^{(l)}$ is given and then impose the additional constraint $\bar{y}^{(l+1)} = \sigma(L_{\bar{y}^{(l)}}f^{(l)})$ for $l\in\{0,\dots,N-1\}$. 

Thus for any fixed set $\{y_i^{(l)}:i=1,\dots,m, l=0,\dots,N\}$, we know a minimizer for $f^{(l)}$ will take the form $f^{(l)} = \sum_{i=1}^{m}L_{y_i^{(l)}}^*z^{(l)}_i$. One thus reduces the above problem to a finite dimensional nonlinear program in $y^{(l)}_i,z^{(l)}_i$. Solving it like a multiple shooting problem with each segment minimized and then a consensus constraint on the hidden variables also make the problem highly parallelizable. Below we show an example of a 3 layer neural network with $\mathcal{H}^{(l)}$ being an RKHS space with a squared exponential kernel of functions from $\mathbb{R}^2\to\mathbb{R}^3$. The neural network is used as a 3-class classifier. Inputs $x_i$ are points from a point cloud in $\mathbb{R}^2$ and the outputs $y_i$ are class labels encoded as a one hot encoding, $y_i=(1,0,0)$ for class 1, $y_i=(0,1,0)$ for class 2 and $y_i=(0,0,1)$ for class 3. Then starting with a random guess for $\{y_i^{(l)}:i\in\{1,2\}\}$ and $\{z_i^{(l)}:i\in\{1,2,3\}\}$, we solve the optimization in \eqref{eq:nnopt} with repeated optimizations tightening the constraint towards $\bar{y}^{(l)} = \sigma(L_{\bar{y}^{(l-1)}})$. Passing the output predictions of the network through a logistic function, gives us a probability for any point in $\mathbb{R}^2$ to be in class 1,2 or 3. A logistic soft-max function is used to label the predictions. Figure \ref{fig:nn} shows the output of the trained neural network with class probability for points in $\mathbb{R}^2$ shaded with corresponding RGB color values.

\subsection{Learning Stochastic Processes}
Let $(\Omega,\mathcal{F},\mathbb{P})$ be a probability measure space. Consider a family of Hilbert spaces $\mathcal{G}=\{\mathcal{G}_{\omega}:\omega\in \mathcal{F}\}$, in which for each $\omega\in\mathcal{F}$, $\mathcal{G}_{\omega}$ is a Hilbert space of deterministic functions $f_{\omega}:\mathcal{X}\to \mathcal{R}_\omega$ taking members of an index set $\mathcal{X}$ to a deterministic Hilbert space $\mathcal{R}_\omega$ of vectors in $\mathbb{R}^n$.
$\mathcal{G}$ forms a semi-Hilbert space of Stochastic Processes such that $\langle f,g \rangle_\mathcal{G}:= \mathbb{E}[\langle f(\cdot,\omega),g(\cdot,\omega)\rangle_{\mathcal{G}_\omega} ]$. Define an equivalence relation $\sim$ which says $f\sim g$ if $\langle f-g,f-g \rangle_\mathcal{G}=0$. The quotient space $\mathcal{H}=\mathcal{G}\backslash \sim$ then defines a Hilbert space where all equivalent processes are considered as a single element in the space. Similarly defining $\mathcal{R}=\{\mathcal{R}_\omega:\omega\in\mathcal{F}\}$ with inner product $\langle z_1,z_2 \rangle_\mathcal{R}= \mathbb{E}[\langle z_1(\omega),z_2(\omega) \rangle_\mathcal{R}]$ and an equivalence relation $z_1\sim z_2$ if $\langle z_1-z_2,z_1-z_2 \rangle_{\mathcal{R}}=0$. We get a Hilbert space $\mathcal{Z}=\mathcal{R}\backslash \sim$ of $n$-dimensional random vectors.

Now, consider a parametric evaluation operator $L_x:\mathcal{H}\to\mathcal{Z}$ defined as $L_x f:= f(x,\cdot)$ where $L_xf$ maps $f$ to a Gaussian random vector in $\mathcal{Z}$. 
A classical additive Gaussian noise observation model is $y = f(x)+\eta$ with $\eta \sim \mathcal{N}(0,\Sigma_\eta)$. Thus $y=L_xf + \eta$ maps $f$ to a Gaussian random observation vector $y\in\mathcal{Z}$ if $\mathcal{H}$ is a space of Gaussian processes. 

The adjoint $L_x^*$ can then be specified by observing that $\langle L_x^*z,f\rangle_\mathcal{H} =\mathbb{E}[\langle L_{x,\omega}^*z(\omega),f(\cdot,\omega)\rangle_{\mathcal{G}_\omega}] = \langle z,L_x f\rangle_\mathcal{Z} = \mathbb{E}[z(\omega)^T f(x,\omega)]$. Then if we restrict $\mathcal{G}_\omega$ to be a RKHS with kernel $K_\omega(\cdot,\cdot)$, the adjoint action $L^*_{x,\omega}z(\omega) = K_\omega(x,\cdot)z$ maps the random vector $z$ to the random process $K_\omega(x,\cdot)z(\omega)$ for random events $\omega$. A special case of the above would be to consider a common RKHS $\mathcal{G}$ with kernel $K$ for all $\omega$, then the adjoint $L^*_{x}z = K(x,\cdot)z$.

Now with the spaces and adjoint defined we can consider a regression problem with $\mathcal{H}$ being a RKHS for Gaussian processes with kernel $K$, and $\mathcal{Z}$ being the space of $n$ dimensional Gaussian random vectors.
\begin{equation}
\underset{f\in\mathcal{H}}{\text{min}} \quad \sum_{i=1}^{m} || y_i - L_{x_i}f - \eta_i ||_\mathcal{Z}^2 + \lambda || f ||_\mathcal{H}^2
\end{equation}

Here the functionals $C(L_1f,\dots,L_mf), \Omega(f)$ are strictly convex and orthomonotone with respect to the subspace valued map $S_{\mathbb{R}}$. From Theorem \ref{thm:NecSuff} we know a linear representer w.r.t. $S$ must exist for a minimizer.

Thus a unique minimizer of the form $f_{opt} \in \{\sum_{i=1}^{m} K({x_i},\cdot) z_i:z_i\in\mathcal{Z}\}$ exists. Substituting for $f_{opt}$ into the minimization problem we can now get a finite dimensional minimization problem with decision variables being the mean and variances of $z_i$. 

Let $\bar{z}_i^\star$ be the mean for optimal $z_i$ and $C_{z_{ij}}^\star$ be the covariance between the optimal $z_i,z_j$.
The mean and variance functions for the process can then be written as $\bar{f}(x) = \sum_{i=1}^{m} K(x_i,x)\bar{z}_i^\star$ and $\Sigma_{f}(x) = \sum_{i,j=1}^{m}K(x_i,x)C_{z_{ij}}^\star K(x,x_j)$.

\begin{figure}
	\centering
	\centering
	\includegraphics[width=0.7\textwidth]{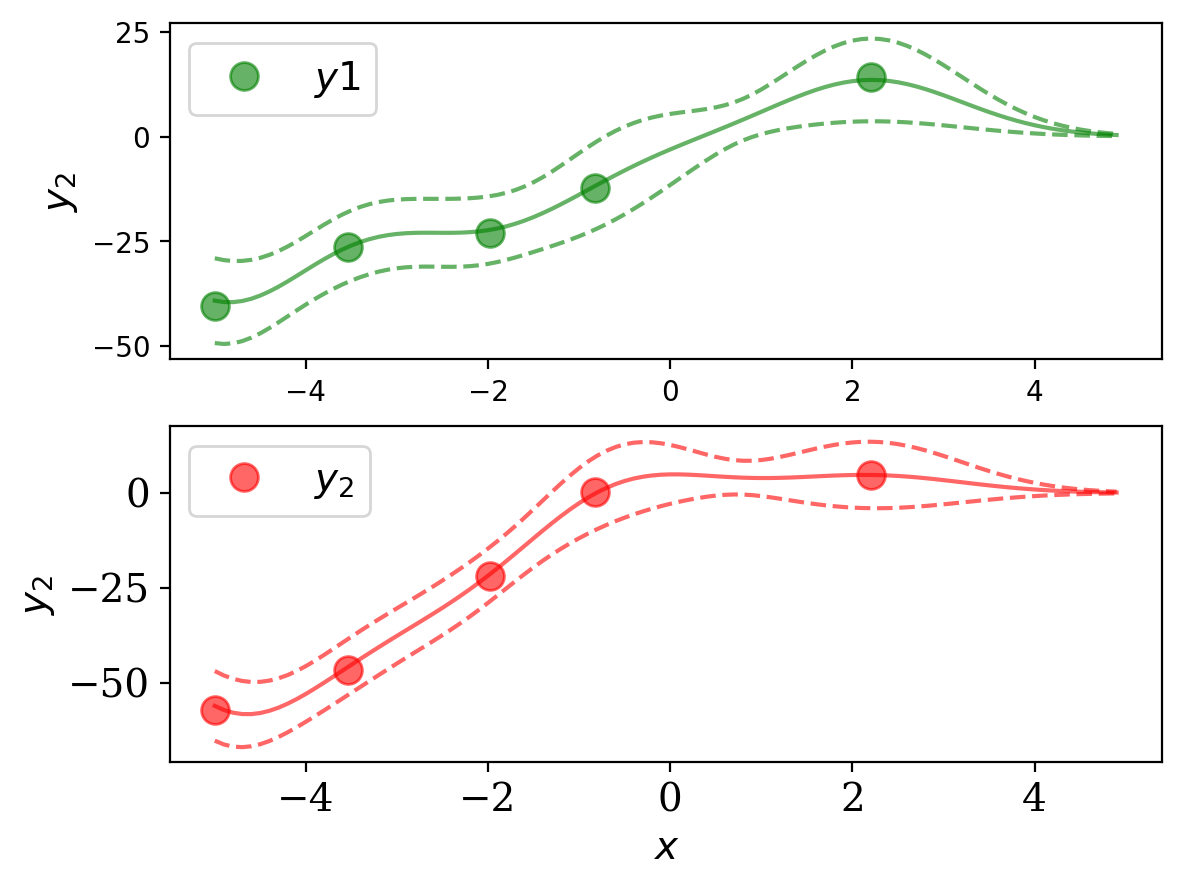}
	\caption{Least squares regression in Gaussian process space}
	\label{fig:GaussianRegression}	
\end{figure}

Figure \ref{fig:GaussianRegression} shows an example for such a regression with a squared exponential kernel mapping with the output $y_i\in \mathcal{Z}$ being a two dimensional Gaussian random vector and $x_i\in\mathbb{R}$. It should be noted that by writing the KKT conditions for optimality, it can be verified that the mean prediction coincides with the classical result for Bayesian prediction, with $z_i^\star$ being given as a solution to the linear system of equations
\[
\sum_{k=1}^{m} (\lambda \delta_{ik} +K(x_i,x_k))\bar{z}_k = \bar{y}_i \qquad \forall i\in \mathbb{N}_m
\] where $\delta_{ik}=1$ if $i=k$ and $0$ otherwise.

\subsection{$\ell_1$-Regularization}

Let $\mathcal{X}=\mathbb{R}^l$, $\mathcal{Y}=\mathbb{R}^{n\times k}$, $\mathcal{Z}=\mathbb{R}^k$ and $\mathcal{H}=\mathbb{R}^{n}$. Let $\phi:\mathcal{X}\to\mathcal{Y}$ be a given collection of features and let $\{e_1,\dots,e_n\}$ be the standard basis for $\mathbb{R}^n$. Consider the continuous linear operator $L_{x,\phi}:\mathcal{H}\to\mathcal{Z}$ from Example \ref{ex:finiteexpbasis}, where $L_{x,\phi}(W) = \phi(x)^TW$. Then consider the $\ell_1$-regularization problem for feature selection given a set of observations $\mathcal{D} = \{(x_i,y_i):x_i\in\mathcal{X},y_i\in\mathcal{Z} ,i = 1,\dots,m\}$ given by,
\begin{equation}
\label{eq:l1prob}
\underset{W\in\mathcal{H}}{\text{min}} \quad \sum_{i=1}^{m} || y_i - L_{x_i,\phi}W ||_\mathcal{Z}^2 + \lambda || W ||_1^2
\end{equation}
where the $||W||_1=\left( \sum_{i=1}^{n}|W_{i}| \right)$. Given that the $\ell_1$ norm is orthomonotone with respect to $S_{proj}$, where we can write $S_{proj}(A)=\{\sum_{i}\lambda_i e_{i}:\lambda\in\mathbb{R},W\in A, W_{i}\neq 0 \}$ for any $A\subseteq\mathcal{H}$. Then from the Generalized Representer Theorem we know that a minimizer for \eqref{eq:l1prob} must exist in $S_{proj}(A)$, for $A=\mathcal{N}_{L_{x_1,\phi},\dots,L_{x_m,\phi}}^\perp = \{\sum_{i=1}^{m}L_{x_i,\phi}^*z_i:z_i\in\mathcal{Z}\}$.

From Example \ref{ex:finiteexpbasis}, we also know that $L_{x_i,\phi}^*z_i = \phi(x_i)z_i$. Thus $S_{proj}(A) = \{\sum_j \lambda_j e_j: \forall i \in\{1,\dots,m\}, \; \phi(x_i)^Te_j \neq 0, \lambda_j\in\mathbb{R} \}$. Substituting this form of the minimizer into \eqref{eq:l1prob}, we can then find the optimal $\lambda_j$s. The above problem is often used as a means for sparse feature selection in learning problems.

Note that while we showed the implications of having arbitrary Hilbert valued output spaces using the example of stochastic regression, we have not yet shown an implication of not having $r$-regular subspace valued maps. The examples from neural networks and stochastic regression were covered by $S_{\mathbb{R}}$ which is $1$-regular and the above $\ell_1$ problem was covered by $S_{proj}$ which for $n$-dimensional $\mathcal{H}$ is $n$-regular. To give an example of a subspace valued map that is not $r$-regular for any finite $r$ we must consider the $\ell_1$ regularization problem with $\mathcal{H}$ being a infinite dimensional Hilbert space for which the $\ell_1$ norm is well defined and an basis analogous to $\{e_1,\dots,e_n\}$ is available. 

For this purpose, let $\mathcal{X}=\mathbb{Z}$ be the set of integers and $\mathcal{Z}=\mathbb{R}$. Let $\mathcal{F}=2^\mathbb{Z}$ be a sigma algebra on $\mathcal{X}$ and $\mu$ be the counting measure on $(\mathcal{X},\mathcal{F})$ measurable space.
Let $\mathcal{H}$ be the space of $\ell_2(\mathcal{X},\mathcal{F},\mu)$ functions from $\mathcal{X}$ to $\mathbb{R}$ such that for any $f\in\mathcal{H}$, $||f||_2 = \sum_{i\in\mathbb{Z}} |f(i)|^2 <\infty$ and $\langle f,g \rangle_{\mathcal{H}} = \sum_{i\in\mathbb{Z}} f(i)g(i)$. Let $\langle z_1,z_2 \rangle_{\mathcal{Z}} = z_1z_2$ be the scalar product on $\mathcal{Z}=\mathbb{R}$.

Note that the $\ell_1$ norm is well defined for all $f\in\mathcal{H}$ as $||f||_1 = \sum_{i\in\mathbb{Z}} |f(i)|<\infty$.
Further a set of orthonormal basis for $\mathcal{H}$ can be written as  $\{(\delta_i:\mathcal{X}\to\mathcal{Z}): i\in\mathbb{Z}\}$ with $\delta_i$ defined as $\delta_i(j) = \begin{cases}
1 & \text{if } i=j \\
0 & \text{otherwise}
\end{cases}$. 
The above space of $\ell_2$ functions forms a complete Hilbert space as shown by \cite[Riesz-Fischer Theorem,][]{rudin1964principles}. 

Further note that the evaluation operator $L_x:\mathcal{H}\to\mathcal{Z}$ defined as $L_xf = f(x)$ for any $x\in\mathcal{X}$ is a bounded (implying continuous) linear operator on $\ell_2(\mathcal{X},\mathcal{F},\mu)$ with the adjoint $L_x^*$ given by $\delta_x(\cdot)$, since for all $z\in\mathbb{R}$, $\langle z,L_xf \rangle_{\mathcal{Z}}=zf(x) = \langle z\delta_x,f\rangle_{\mathcal{H}} =\langle L_x^*z,f\rangle_{\mathcal{H}}$.

Then for the problem,
\begin{equation}
\label{eq:l1prob2}
\underset{f\in\mathcal{H}}{\text{min}} \quad \sum_{i=1}^{m} || y_i - L_{x_i}f ||_\mathcal{Z}^2 + \lambda || f ||_1^2
\end{equation}
we have $\Omega(f) = || f ||_1^2$ orthomonotone with respect to the subspace valued map $S_{proj}(A) = \{\lambda \langle a,\delta_i \rangle_{\mathcal{H}}\delta_i: a\in A,\lambda\in\mathbb{R},i\in\mathbb{Z} \}$ (the proof for orthomonotonicity follows from similar arguments as presented in the proof for Lemma \ref{thm:l1ortho}). The $S_{proj}$ thus defined is not $r$-regular for any finite $r$. However by Theorem \ref{thm:NecSuff} we know the minimizer must be of the form $S_{proj}(\{\sum_{i=1}^{m}L_{x_i}^*z_i:z_i\in\mathbb{R}\}) = S_{proj}(\{\sum_{i=1}^{m}\delta_{x_i}(\cdot)z_i:z_i\in\mathbb{R}\})=\{\sum_{i=1}^{m}\delta_{x_i}(\cdot)z_i:z_i\in\mathbb{R}\}$. Thus \eqref{eq:l1prob2} provides an example of problems where a non $r$-regular subspace valued map is required and thus was not be covered by previous counterparts of the Generalized theorem.

\section{Conclusion}
We presented here an extension to existing work on generalized representer theorems by extending the result to apply to learning arbitrary Hilbert space-valued function spaces. Subspace valued maps with a super additive property were introduced and the property was shown to be necessary and sufficient for preserving a vector space structure. The assumption of ``r-regularity" was removed from the generalized theorem in order to allow more general subspace valued maps and its implications were shown for the $\ell_1$ regularization problem in function spaces. The formalism of linear operators and adjoints was introduced into the generalized representer theorem and new properties of subspace valued maps when composed with linear operators were established in order to achieve the said extension. The $\ell_1$ norm was shown to be orthomonotone with respect to a projection based subspace valued map that shows the sparsity inducing nature of the $\ell_1$ norm regularizers. Finally examples from kernel based neural networks, stochastic process learning and feature selection with $\ell_1$ norms were presented to show the application of the generalized theorem to these problems. 




\appendix
\section{Appendix}

\subsection{Subspace Valued Maps}
\begin{definition}{(Quasilinear map)\\}
	A map $S:\mathbb{F}\to\Span(\mathbb{F})$ is called \textbf{quasilinear} if 
	\[\forall A,B\in \mathbb{F}, \lambda_1,\lambda_2 \in \mathcal{K}, \qquad S(\lambda_1A+\lambda_2B)\subseteq \lambda_1S(A)+\lambda_2S(B) \]
\end{definition}
\begin{definition}{(Idempotent map)\\}
	A map $S:\mathbb{F}\to\Span(\mathbb{F})$ is called \textbf{idempotent} if \[\forall A\in \mathbb{F}, \qquad S(S(A))=S(A)\]
\end{definition}
\begin{definition}{($r$-regular maps)\\}
	For some $r\in\mathbb{N}$, we call a map $S:\mathbb{F}\to\mathbb{F}$, \textbf{$r$-regular} if
	\begin{enumerate}
		\item it is inclusive, quasilinear and idempotent
		\item for all $a\in U$, dimension of $S(a)$ is at most $r$
	\end{enumerate} 
\end{definition}
\SLprop*
\begin{proof}
	For any subspaces $A,B\subseteq \mathcal{H}$, $LA,LB\subseteq \mathcal{Z}$ are subspaces in $\mathcal{Z}$ ($\because$ $L$ is a continuous linear operator). Then $S_L(A+B)= L^*S(LA+LB)\supseteq L^*S(LA)+L^*S(LB)=S_L(A)+S_L(B)$ (by super additivity of $S$). Thus $S_L$ is super additive.
	Further if $S$ is closed then $S_L$ is also closed by default since continuous linear operators on Hilbert subspaces map closed sets to closed sets.
	Finally, note that $LA\subseteq S(LA)$ ($\because$ $S$ is inclusive). Then $L^*LA\subseteq L^*S(LA)$. However unless $A\subseteq L^*LA$ this does not imply inclusivity for $S_L$.
\end{proof}
\begin{lemma}
	\label{lemma:OrthS}
	Let $S_L$ be given by Proposition \ref{prop:pullbackS}. Then, $L(S_L(A)^\perp) \subseteq S(LA)^\perp$.
	\begin{proof}
		For all $z\in S(LA)$ and $g\in S_L(A)^\perp$, $\langle z, Lg \rangle_{\mathcal{Z}}=\langle L^*z, g \rangle_{\mathcal{H}}$. But by definition $L^*z \in S_L(A)$ ($\because$ $S_L(A)=L^*S(LA)$) and $g\in S_L(A)^\perp$, implying $\langle z, Lg \rangle_{\mathcal{Z}}=0$. Thus $L(S_L(A)^\perp)\subseteq S(LA)^\perp$.
	\end{proof}
\end{lemma}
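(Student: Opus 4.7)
The plan is to unwind the definition $S_L(A) := L^* S(LA)$ from Proposition \ref{prop:pullbackS} and combine it with the defining identity of the adjoint, $\langle z, Lg \rangle_\mathcal{Z} = \langle L^* z, g \rangle_\mathcal{H}$; the inclusion will then collapse to a one-line check. Concretely, I would pick an arbitrary element of $L(S_L(A)^\perp)$, write it as $Lg$ with $g \in S_L(A)^\perp$, and verify that $Lg$ annihilates every $z \in S(LA)$ under $\langle \cdot , \cdot \rangle_\mathcal{Z}$; once that is shown, $Lg \in S(LA)^\perp$ by definition.

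The only nontrivial ingredient is the observation that $L^* z \in S_L(A)$ whenever $z \in S(LA)$, which is immediate from $S_L(A) = L^* S(LA)$. Feeding this into the adjoint identity gives $\langle z, Lg \rangle_\mathcal{Z} = \langle L^* z, g \rangle_\mathcal{H} = 0$, where the second equality uses $g \perp S_L(A)$. Since $z \in S(LA)$ and $g \in S_L(A)^\perp$ were arbitrary, the stated inclusion follows.

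I do not foresee any real obstacle here: the lemma is essentially an algebraic rewriting of the adjoint relation, and neither the super additive, closed, nor inclusive hypotheses of Proposition \ref{prop:pullbackS} need to be invoked --- only the set-theoretic identity $S_L(A) = L^* S(LA)$ and the boundedness of $L$ (so that the adjoint exists and the pairing is well-defined) enter the argument. The one point worth emphasising is that this is a genuine one-sided inclusion; equality $L(S_L(A)^\perp) = S(LA)^\perp$ would require additional structure on $L$ such as unitarity, and indeed the surrounding text exploits exactly this asymmetry when discussing Definition \ref{def:orhtpresL} on when $L$ preserves orthogonality.
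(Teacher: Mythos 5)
Your argument is correct and is exactly the paper's proof: both take $Lg$ with $g\in S_L(A)^\perp$, use the adjoint identity $\langle z, Lg\rangle_{\mathcal{Z}}=\langle L^*z,g\rangle_{\mathcal{H}}$, and note that $L^*z\in S_L(A)=L^*S(LA)$ forces this to vanish. Your added remarks --- that none of the inclusivity/closedness/super-additivity hypotheses are needed and that the inclusion is genuinely one-sided absent unitarity --- are accurate and consistent with the surrounding discussion.
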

\begin{lemma}{(Derivative operator: $S_L$ inclusive for $S_{proj}$)\\}
	\label{lem:Sproj_Dinclusive}
	Let $E_m=\{e_1,\dots,e_m\}$ be the standard orthonormal basis for $\mathbb{R}^m$ and $E_{mn}=\{e_1,\dots,e_{mn}\}$ be the standard orthonormal basis for $\mathbb{R}^{m\times n}$. Let $\mathcal{H}$ be a Hilbert space of $\mathbb{R}^m$-valued square integrable polynomial functions supported on $[-1,1]^n\subseteq \mathbb{R}^n$ with the Legendre polynomials, given as \{$p_{ij} e_i\in\mathcal{H}: p_{ij}(x)=c_j\partial^n_{x_i}[(x_i^2-1)^j],c_j = (j+0.5)^{\frac{1}{2}}(2^j j!)^{-1},j\in\mathbb{N},e_i\in E_m,x_i=\langle x,e_i\rangle_{\mathbb{R}^n}$\} as the orthonormal basis for $\mathcal{H}$. Let $\mathcal{Y}$ be the space of $\mathbb{R}^{m\times n}$-valued functions and $D:\mathcal{H}\to\mathcal{Y}$ be the derivative operator from Example \ref{ex:derOp}.
	Let $S_{proj}(A)=\{\sum_{j=0}^{\infty}\sum_{i=1}^{m}\lambda_{ij}\langle ae_i, p_{ij} e_i \rangle_\mathcal{H}e_ie_i^T:a\in A,\lambda_{ij}\in\mathbb{R} \}$ be an inclusive, closed, super additive subspace valued map on $\mathcal{Y}$. Then $S_L:\mathbb{F}_\mathcal{H}\to\Span(\mathbb{F}_\mathcal{H})$ defined as $S_L=D^*\circ S\circ D$ is inclusive, closed and super additive.
	\begin{proof}
		Closed and super additive $S_L$ is implied by Proposition \ref{prop:pullbackS}. To show inclusivity of $S_L$, let $\phi_i=\{j\in\mathbb{N}: \forall a\in A, \langle ae_i, p_{ij} e_i \rangle_\mathcal{H}\neq 0\}$ and note that $D^*S_{proj}(DA):=\{\sum_{i=1}^{m}\sum_{j\in \phi_i} \lambda_{ij}\partial_{x_i}^2p_{ij}e_i,\lambda_{ij}\in\mathbb{R}\}\supseteq A$ for any subspace $A=\{\sum_{i=1}^{m} \sum_{j\in \phi_i}\lambda_{ij}p_{ij}e_i\}$ (since monomials of all orders are still present in $D^*S_{proj}(DA)$). Thus $S_L$ is inclusive.
	\end{proof}
\end{lemma}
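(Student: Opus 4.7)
The plan is to split the lemma into two parts. The assertion that $S_L$ is closed and super additive is automatic: $S_{proj}$ is closed, inclusive, and super additive on $\mathcal{Y}$ (by Example \ref{ex:subvalmaps}), so Proposition \ref{prop:pullbackS} immediately gives that $S_L = D^* \circ S_{proj} \circ D$ is closed and super additive on $\mathcal{H}$. All substantive work lies in establishing inclusivity, which Proposition \ref{prop:pullbackS} does \emph{not} give for free (since $D$ is not unitary).

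For inclusivity, my strategy is to track how the composition $D^* \circ S_{proj} \circ D$ acts on a single Legendre basis element $p_{ij} e_i$ and then extend by linearity. The key structural observation is that $p_{ij}(x) = c_j P_j(x_i)$ depends only on the $i$-th coordinate of $x$, so the matrix-valued function $D(p_{ij} e_i)$ has only one nonzero entry, namely the $(i,i)$ entry $c_j\,P_j'(x_i)$. The projector $S_{proj}$, whose basis directions are of the form $e_k e_k^T$ weighted by Legendre polynomials, therefore preserves this diagonal-sparse structure. Finally, by the adjoint formula for $D$ derived in Example \ref{ex:derOp}, applying $D^*$ to such a diagonal-sparse matrix returns an element of $\mathcal{H}$ along the $i$-th coordinate direction, equal (up to sign and the free scalars introduced by $S_{proj}$) to $\partial_{x_i}^2 p_{ij}\, e_i$.

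With this computation in hand, and given a subspace $A$ spanned by $\{p_{ij} e_i : j \in \phi_i,\ i = 1,\dots,m\}$, the image $D^* S_{proj}(DA)$ is spanned by $\{\partial_{x_i}^2 p_{ij}\, e_i : j \in \phi_i,\ i = 1,\dots,m\}$ together with the free coefficients coming from $S_{proj}$. To conclude $A \subseteq S_L(A)$, it then suffices to show that the linear span of $\{\partial_{x_i}^2 P_j : j \in \phi_i\}$ contains each $P_j$ appearing in $A$. This follows from the fact that the Legendre polynomials span polynomials of all degrees in one variable, and repeated differentiation shifts indices but remains surjective onto lower-degree polynomial spaces, so the original basis elements can be recovered as finite linear combinations of their second derivatives.

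The main obstacle I anticipate is the degree-shift caused by $\partial_{x_i}^2$: since $\partial_{x_i}^2$ annihilates $P_0$ and $P_1$, constant and linear Legendre directions cannot be recovered directly from their own image. The delicate step is therefore to justify the ``monomials of all orders are still present'' claim rigorously — either by exhibiting, for each target $P_j$, a linear combination of $\{\partial_{x_i}^2 P_{j'} : j' \in \phi_i\}$ realizing it (which requires $\phi_i$ to contain sufficiently many higher indices), or by observing that the free scalars in $S_{proj}$ already supply the missing degrees of freedom before $D^*$ is applied. A secondary technical point is that the integration-by-parts computation underlying $D^*$ (Example \ref{ex:derOp}) needs the boundary terms on $\partial[-1,1]^n$ to vanish, so care is needed to reconcile the ``polynomial on $[-1,1]^n$'' setup with the compact-support assumption used there; I would handle this by interpreting $\mathcal{H}$ as an $L^2$ polynomial space on $[-1,1]^n$ with the Legendre polynomial orthonormal basis, so that $D^*$ is defined via the $L^2$ adjoint pairing and the identification $[D^*g]_j = -\sum_i \partial_{x_i} [g]_{ji}$ still holds on the polynomial core.
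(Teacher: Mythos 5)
Your strategy is the same as the paper's: closedness and super additivity come for free from Proposition \ref{prop:pullbackS}, and inclusivity is to be checked by pushing a Legendre basis element through $D^*\circ S_{proj}\circ D$ and observing that $D^*S_{proj}(DA)$ is spanned by the vectors $\partial_{x_i}^2 p_{ij}e_i$. However, the ``main obstacle'' you defer to the end is not a technicality you can expect to discharge later --- it is exactly the step the paper waves through with the phrase ``monomials of all orders are still present,'' and your suspicion about it is correct. The operator $\partial_{x_i}^2$ lowers polynomial degree by two, and $S_{proj}$ only retains Legendre directions that actually appear in $DA$: the free scalars $\lambda_{ij}$ multiply the coefficients $\langle ae_i,p_{ij}e_i\rangle_{\mathcal H}$, so they rescale existing components but cannot resurrect directions whose coefficient is zero. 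Concretely, for $A=\Span\{p_{i5}e_i\}$ the function $D(p_{i5}e_i)$ has degree $4$ in $x_i$, hence $S_{proj}(DA)$ lives in the span of $p_{i4},p_{i2},p_{i0}$ times $e_ie_i^T$, and applying $D^*$ leaves polynomials of degree at most $3$ in $x_i$; this space cannot contain $p_{i5}e_i$, so $A\nsubseteq S_L(A)$. Your fallback options do not rescue this: the free scalars in $S_{proj}$ do not supply missing degrees (for the reason above), and requiring $\phi_i$ to ``contain sufficiently many higher indices'' amounts to restricting to subspaces $A$ with infinite, shift-stable index sets --- whereas inclusivity, as defined in the paper, must hold for \emph{all} (closed) subspaces $A\subseteq\mathcal H$, including finite-dimensional ones. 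So the argument, in both your version and the paper's, does not establish the stated conclusion.

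Your secondary concern is also well placed: the adjoint formula $[D^*g]_j=-\sum_i\partial_{x_i}[g]_{ji}$ from Example \ref{ex:derOp} relies on vanishing boundary terms, which fails for nonzero polynomials on $[-1,1]^n$; defining $D^*$ directly through the $L^2$ pairing on the polynomial space (so that the boundary contributions are carried into $D^*$ rather than dropped) is the right repair, but it changes the explicit form of $D^*$ used in the computation and therefore has to be done before, not after, the inclusivity calculation.
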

\begin{lemma}{(Derivative operator: $D$ preserves orthogonality w.r.t. $S_L$)\\}
	\label{lem:DpresOrtho}
	Given the space of $\mathcal{H}$ and $S_{proj}$ as defined in Lemma \ref{lem:Sproj_Dinclusive}, the derivative operator $D$ preserves orthogonality w.r.t. $S_L$.
	\begin{proof}
	$S_L$ is inclusive by Lemma \ref{lem:Sproj_Dinclusive}. Also $L(S_L(A)^\perp)\subseteq S(LA)^\perp$ by Lemma \ref{lemma:OrthS}. 

Further $L(S_L(A))= DD^*S_{proj}(DA):=\{\sum_{i=1}^{m}\sum_{j\in \phi_i} \lambda_{ij}\partial_{x_i}^3p_{ij}e_ie_i^T:\lambda_{ij}\in\mathbb{R}\}$ and $S_{proj}(DA)=\{\sum_{i=1}^{m}\sum_{j\in \phi_i} \lambda_{ij} \partial_{x_i} p_{ij}  e_ie_i^T : \lambda_{ij} \in \mathbb{R}\}$. Thus $DD^*S_{proj}(DA)\subseteq S_{proj}(DA)$ (since monomials of any order present in $DD^*S_{proj}(DA)$ are also contained in $S_{proj}(DA)$). Thus $L(S_L(A))\subseteq S(LA)$ and given the space of $\mathcal{H}$ and $S_{proj}$ as defined in Lemma \ref{lem:Sproj_Dinclusive}, the derivative operator $D$ preserves orthogonality w.r.t. $S_L$
	\end{proof}
\end{lemma}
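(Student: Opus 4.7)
The plan is to establish the two inclusions required by Definition \ref{def:orhtpresL}: (i) $DS_L(A)\subseteq S_{proj}(DA)$ and (ii) $D(S_L(A)^\perp)\subseteq S_{proj}(DA)^\perp$. Inclusion (ii) is automatic: it is exactly the content of Lemma \ref{lemma:OrthS}, which holds for any continuous linear operator once the pulled-back map is defined via Proposition \ref{prop:pullbackS}. Since Lemma \ref{lem:Sproj_Dinclusive} already guarantees that $S_L = D^*\circ S_{proj}\circ D$ is inclusive, closed, and super additive, no further work is needed for (ii), and all genuine content sits in proving (i).

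For (i), substituting $S_L(A)=D^*S_{proj}(DA)$ reduces the claim to
\[
DD^* S_{proj}(DA)\subseteq S_{proj}(DA).
\]
I would verify this coordinatewise in the Legendre basis. For each basis element $p_{ij}e_i$ of $\mathcal{H}$, since $p_{ij}$ depends only on $x_i$, the derivative is a matrix-valued function with a single nonzero entry: $D(p_{ij}e_i)=\partial_{x_i}p_{ij}\,e_ie_i^T$. Combining this with the adjoint formula $[D^*g]_k = -\sum_{i'}\partial_{x_{i'}}[g]_{ki'}$ from Example \ref{ex:derOp} yields $D^*(\partial_{x_i}p_{ij}\,e_ie_i^T)=-\partial_{x_i}^{2}p_{ij}\,e_i$, and therefore $DD^*(\partial_{x_i}p_{ij}\,e_ie_i^T)=-\partial_{x_i}^{3}p_{ij}\,e_ie_i^T$.

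Extending linearly, I would characterize $S_{proj}(DA)$ as the $\mathbb{R}$-span of the family $\{\partial_{x_i}p_{ij}\,e_ie_i^T : i=1,\dots,m,\; j\in\phi_i\}$, where $\phi_i$ collects the Legendre indices along $e_i$ activated by some element of $A$. The image $DD^* S_{proj}(DA)$ then lies in the span of $\{\partial_{x_i}^{3}p_{ij}\,e_ie_i^T : j\in\phi_i\}$, and the proof reduces, separately for each fixed $i$, to the purely univariate claim that every $\partial_{x_i}^{3}p_{ij}$ with $j\in\phi_i$ lies in the $\mathbb{R}$-span of $\{\partial_{x_i}p_{ij'}: j'\in\phi_i\}$. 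The super additivity of $S_{proj}$ then patches the $i$-summation back together.

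The hard part is precisely this last polynomial containment step. When $\phi_i$ is all of $\mathbb{N}$, derivatives of the Legendre polynomials already exhaust every univariate polynomial of the appropriate degree, so every third derivative sits in the claimed span and (i) follows at once. For general $\phi_i$, the argument needs $\phi_i$ to be stable under the downward shift in degree induced by two further differentiations, which is the property one must either build into the choice of $A$ or into the statement via the ambient Legendre expansion. Everything outside of this univariate bookkeeping reduces to tracking the single-nonzero-entry structure of $D(p_{ij}e_i)$ through the adjoint.
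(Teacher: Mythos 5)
Your route coincides with the paper's: condition (ii) of Definition \ref{def:orhtpresL} is dispatched by Lemma \ref{lemma:OrthS}, and condition (i) is reduced, via the coordinatewise computation $D(p_{ij}e_i)=\partial_{x_i}p_{ij}\,e_ie_i^T$, $D^*(\partial_{x_i}p_{ij}\,e_ie_i^T)=-\partial_{x_i}^2p_{ij}\,e_i$, $DD^*(\partial_{x_i}p_{ij}\,e_ie_i^T)=-\partial_{x_i}^3p_{ij}\,e_ie_i^T$ (the sign being absorbed into the spanning coefficients), to the univariate containment of each $\partial_{x_i}^3p_{ij}$, $j\in\phi_i$, in $\Span\{\partial_{x_i}p_{ij'}:j'\in\phi_i\}$. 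This is exactly the paper's identification of $DD^*S_{proj}(DA)$ and $S_{proj}(DA)$ as spans of third and first derivatives of the activated Legendre basis functions.

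The step you flag as open is a genuine gap, but it is precisely the step the paper asserts without justification: the parenthetical ``since monomials of any order present in $DD^*S_{proj}(DA)$ are also contained in $S_{proj}(DA)$'' is valid only when $\phi_i$ is degree-saturated in the sense you describe. For a general subspace $A$ it fails: take $A=\Span\{p_{15}e_1\}$, so that $\phi_1=\{5\}$ and all other $\phi_i$ are empty. Then $S_{proj}(DA)$ is the one-dimensional span of $\partial_{x_1}p_{15}\,e_1e_1^T$, a degree-$4$ polynomial entry, while $DD^*S_{proj}(DA)$ is the span of $\partial_{x_1}^3p_{15}\,e_1e_1^T$, a nonzero degree-$2$ polynomial entry; neither is a multiple of the other, so $DD^*S_{proj}(DA)\nsubseteq S_{proj}(DA)$. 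Hence the lemma as stated for arbitrary subspaces $A$ requires either your stability hypothesis on $\phi_i$ (closure under the degree shift induced by two further differentiations, satisfied e.g.\ when $\phi_i=\mathbb{N}$) or a restriction of the class of admissible $A$. Your diagnosis of where the difficulty sits is correct, and your argument is complete exactly where the paper's is.
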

Note on the other hand, using $S_{\mathbb{R}}$ instead of $S_{proj}$ does not preserve orthogonality with respect to the corresponding $S_L$.
\vskip 0.2in
\bibliography{RKHS,deepkernels}

\end{document}